\newcommand{\tcore}{\textsc{{tcore}}\xspace}
\newcommand{\A}{\ensuremath{\mathsf{A}}}
\newcommand{\ST}{\ensuremath{\mathsf{ST}}}
\newcommand{\SB}{\ensuremath{\mathsf{SB}}}
\newcommand{\SA}{\ensuremath{\mathsf{SA}}}
\newcommand{\AO}{\ensuremath{\mathsf{AO}}}
\newcommand{\tup}[1]{\ensuremath{\langle #1 \rangle}\xspace}
\newcommand{\cond}[2]{\ensuremath{#1 \triangleright #2}\xspace}
\newcommand{\condz}[3]{\ensuremath{\forall \vec{#1}{:} #2\triangleright #3}\xspace}
\newcommand{\set}[1]{\ensuremath{\{#1\}}\xspace}
\newcommand{\constrThree}{\ensuremath{\mathcal{C}}\xspace}
\newtheoremstyle{mytheorem}
    {3pt}
    {3pt}
    {\normalfont}
    {0pt}
    {\bfseries}
    {.}
    { }
    {}
\theoremstyle{mytheorem}
\newtheorem{definition}{Definition}
\newtheorem{example}{Example}
\let\OldStatex\Statex
\renewcommand{\Statex}[1][3]{%
  \setlength\@tempdima{\algorithmicindent}%
  \OldStatex\hskip\dimexpr#1\@tempdima\relax}
\algrenewcommand\algorithmicindent{1.0em}%
\newcommand{\compileC}{\textsc{CompileC}\xspace}
\def\tcore{TCORE\xspace}
\def\lccr{Lifted\tcore}
\def\lcc{LCC\xspace}
\def\endaction{\ensuremath{fin}\xspace}
\newcommand\Ground{%
\mathbin{\text{\begin{tikzpicture}[circuit ee IEC,yscale=0.4,xscale=0.3]
\draw (0,2ex) to (0,0) node[ground,rotate=-90,xshift=.65ex] {};
\end{tikzpicture}}}%
}
\lstdefinestyle{customstyle}{
    backgroundcolor=\color{gray!10},
    basicstyle=\ttfamily\footnotesize,
    breaklines=true,
    frame=single,
    rulecolor=\color{black},
    keywordstyle=\color{blue},
    commentstyle=\color{green!50!black},
    stringstyle=\color{red}
}
\def\ground{\Ground {}}
\def\groundcond{c^{\ground}}
\def\regression{R}
\def\Lregression{\regression^{L}}
\def\LGamma{\Gamma^L}
\newcommand{\compileCR}{\textsc{CompileCRegression}\xspace}
\def\weakestCcond{w^{c}}
\def\weakestC{w}
\DeclareMathOperator{\val}{=}  
\DeclareMathOperator{\nval}{{\neq}}  
\def\plus{{+}}
\def\minus{{-}}
\def\actioninst{a^{\ground}}
\def\subst{\theta}
\def\substainst{\subst_{\actioninst}}
\def\substz{\subst_{z}}
\def\substzfree{\subst_{\zfree}}
\newcommand{\substzi}[1]{\subst_{z_{#1}}}
\def\substf{\subst_{f}}
\def\substfa{\subst_{f\actioninst}}
\def\substpsi{\subst_{\psi}}
\def\substpsia{\subst_{\psi\actioninst}}
\newcommand{\substza}[1]{\subst_{z_{#1}\actioninst}}
\newcommand{\applys}[1]{|_{#1}}
\newcommand{\applyai}[1]{\applys{\subst_{#1}}}
\def\qeddef{\hfill $\square$}
\def\qedex{\hfill $\lozenge$}
\def\qedprop{\hfill $\lozenge$}
\def\pddlProblem{\Pi} 
\def\pddlAtoms{F}
\def\pddlActions{A}
\def\pddlInit{I}
\def\pddlGoal{G}
\def\pddlObjects{O}
\def\pddlConstraints{C}
\def\constr{q}
\def\pddlActionArgs{Arg}
\newcommand{\actionArgs}[1]{\pddlActionArgs(#1)}
\def\pddlPreconditions{Pre}
\newcommand{\actionPrec}[1]{\pddlPreconditions(#1)}
\def\pddlEffects{Eff}
\newcommand{\actionEff}[1]{\pddlEffects(#1)}
\newcommand{\actionEffPlus}[1]{\pddlEffects^{+}(#1)}
\newcommand{\actionEffMinus}[1]{\pddlEffects^{-}(#1)}
\def\Always{\mathsf{Always}}
\def\Sometime{\mathsf{Sometime}}
\def\AtMostOnce{\mathsf{AtMostOnce}}
\def\SometimeBefore{\mathsf{SometimeBefore}}
\def\SometimeAfter{\mathsf{SometimeAfter}}
\def\stateseq{\sigma}
\def\plan{\pi}
\def\zfree{\vec{z}^f}
\def\symk{\textsc{SymK}}
\def\fd{\textsc{FD}\xspace}
\def\blocksworldtwo{\textsc{Blocksworld2}}
\def\clearF{\mathtt{clear}}
\def\onF{\mathtt{on}}
\def\atF{\mathtt{at}}
\def\ontableF{\mathtt{onTable}}
\def\handemptyF{\mathtt{handEmpty}}
\def\holdingF{\mathtt{holding}}
\def\towerbaseF{\mathtt{towerBase}}
\def\putdown{\mathtt{putdown2}}
\def\pickup{\mathtt{pickup2}}
\def\rthree{\mathtt{R3}}
\def\ltwothree{\mathtt{L23}}
\def\lthreetwo{\mathtt{L32}}
\def\addlegendimage{\csname pgfplots@addlegendimage\endcsname}
\pgfplotsset{compat=newest}
\pgfplotsset{tick label style={/pgf/number format/fixed}}
\definecolor{bblue}{rgb}{0.2,0.4,0.8}
\definecolor{rred}{rgb}{0.8,0.2,0.2}
\definecolor{ggreen}{rgb}{0.2,0.7,0.3}
\pgfplotsset{
    myKTickLabel/.style={
        xticklabel={
            \pgfmathparse{
                \tick >= 1e6 ? int(\tick/1e6) :
                (\tick >= 1e3 ? int(\tick/1e3) : int(\tick))
            }%
            \pgfmathresult%
            \pgfmathparse{
                \tick >= 1e6 ? "M" :
                (\tick >= 1e3 ? "K" : "")
            }%
            \pgfmathresult
        }
    }
}
\definecolor{lightblue}{HTML}{A4DBE8}
\title{Two Constraint Compilation Methods for Lifted Planning}
\author{
    Periklis Mantenoglou\textsuperscript{\rm 1}, 
    Luigi Bonassi\textsuperscript{\rm 2},
    Enrico Scala\textsuperscript{\rm 3},
    Pedro Zuidberg Dos Martires\textsuperscript{\rm 1}
}
\begin{document}

\maketitle

\begin{abstract}
    We study planning in a fragment of PDDL with qualitative state-trajectory constraints, capturing safety requirements, task ordering conditions, and intermediate sub-goals commonly found in real-world problems.
    A prominent approach to tackle such problems is to compile their constraints away, leading to a problem that is supported by state-of-the-art planners. 
    Unfortunately, existing compilers do not scale on problems with a large number of objects and high-arity actions, as they necessitate grounding the problem before compilation.
    To address this issue, we propose two methods for compiling away constraints without grounding, making them suitable for large-scale planning problems.
    We prove the correctness of our compilers and outline their worst-case time complexity.
    Moreover, we present a reproducible empirical evaluation on the domains used in the latest International Planning Competition.
    %
    Our results demonstrate that our methods are efficient and produce planning specifications that are orders of magnitude more succinct than the ones produced by compilers that ground the domain, while remaining competitive when used for planning with a state-of-the-art planner.
    %
    %
    %
\end{abstract}


\section{Introduction}\label{sec:intro}

Planning involves identifying a sequence of actions that brings about a desired goal state, and is vital in several real-world tasks~\cite{Ropero2017AVR,DBLP:conf/iccad/ShaikP23}. 
%
In addition to final goals, a planning task may require the satisfaction of certain temporal constraints throughout the execution of the plan~\cite{DBLP:journals/amai/BacchusK98}. These constraints may enforce, e.g., a set of conditions that must be avoided~\cite{DBLP:conf/aaai/Steinmetz0KB22}, or the ordered resolution of certain tasks~\cite{DBLP:journals/jair/HoffmannPS04}, both of which are common in planning problems~\cite{DBLP:conf/iclr/JackermeierA25}. Many planning approaches supporting these types of specifications have been considered in the literature \cite{DBLP:journals/amai/BacchusK98,DBLP:conf/ijcai/PatriziLGG11,micheli:19:tpack,DBLP:conf/aaai/MallettTT21,bonassi:22:pac}. 

In this paper, we focus on the fragment of PDDL that introduced and standardised constraints over trajectories of states~\cite{DBLP:journals/ai/GereviniHLSD09}. Some approaches handle these types of constraints directly in the search engine~\cite{DBLP:conf/aips/BentonCC12,DBLP:conf/ijcai/HsuWHC07,DBLP:conf/aips/Edelkamp06}, while others remove the constraints using compilation approaches~\cite{DBLP:conf/aips/PercassiG19, DBLP:conf/aaai/BonassiGS24}. Alternatively, these constraints can be translated into Linear Temporal Logic (LTL) \cite{pnuelli:77:ltl}, or one of its variants \cite{DBLP:conf/ijcai/GiacomoSFR20}, and handled by existing compilers \cite{DBLP:conf/aaai/BaierM06,DBLP:conf/ijcai/TorresB15,DBLP:conf/aips/BonassiGFFGS23}. The compilation step yields an equivalent constraint-free problem that can be solved by off-the-shelf planners~\cite{DBLP:journals/jair/Helmert06,DBLP:conf/aaai/SpeckMN20}.

For PDDL qualitative state-trajectory constraints, TCORE~\cite{DBLP:conf/aips/BonassiGPS21} constitutes the state-of-the-art compilation approach.
Specifically, TCORE employs a so-called regression operator~\cite{DBLP:conf/ecai/Rintanen08} that expresses the circumstances under which an action may affect constraint satisfaction.
%
%
%
However, TCORE's regression operator only works on ground actions, requiring compilation over the fully grounded problem, which can be orders of magnitude larger than the original (non-ground) problem.

Typically, the cost of grounding increases with the number of objects.
This becomes a critical issue in domains with high-arity actions, which have to be grounded once for each valid combination of objects in their arguments. Consider, e.g., the following variant of the \textsc{Blocksworld} domain.

%

\begin{example}[\blocksworldtwo]\label{ex:running}
%
\textsc{Blocksworld} involves rearranging stacks of blocks on a table to reach a final configuration by moving one block at a time.
States are described by atoms such as $\clearF(b)$, expressing that there is no block on top of block $b$, and $\onF(b_1,b_2)$, expressing that block $b_1$ is on top of block $b_2$. 
\blocksworldtwo\ extends \textsc{Blocksworld} by allowing operations on towers consisting of two blocks. 
Action $\pickup$, e.g., allows the agent to pick up a two-block tower.
Grounding a \blocksworldtwo\ problem introduces one ground instance of the $\pickup$ action for each (ordered) pair of blocks in the problem. 
\qedex
\end{example}

Although large instances of such domains can be handled by lifted planners~\cite{DBLP:conf/ijcai/CorreaG24, DBLP:conf/aaai/HorcikSSP25,DBLP:conf/aips/HorcikF21, DBLP:conf/ecai/HorcikF23, DBLP:conf/aips/Fiser23,  DBLP:conf/aaai/ChenTT24}, these approaches do not consider constraints. In \blocksworldtwo, these constraints might be, for example, that block $b_1$ must never be on the table, or that placing $b_1$ on top of $b_2$ is only allowed after placing $b_3$ on the~table.

To address this issue, we propose two constraint compilers that are \textit{lifted}, in the sense that they avoid unnecessary grounding.
We summarise our \textbf{contributions} as follows.
\textbf{First}, we propose \lccr, a constraint compiler that employs a lifted variant of the regression operator used by TCORE
in order to bypass grounding.
\textbf{Second}, we propose the Lifted Constraint Compiler (\lcc) that compiles away constraints without grounding and without the use of a regression operator.
\textbf{Third}, we demonstrate the correctness of \lccr and \lcc, and outline their worst-case time complexity.
\textbf{Fourth}, we present an empirical evaluation on challenging planning domains from the latest International Planning Competition~\cite{DBLP:journals/aim/TaitlerAEBFGPSSSSS24}.
Our results demonstrate that \lccr\ and \lcc\ are efficient and lead to compiled specifications that are significantly more succinct than the ones produced by compilers that ground the domain, while remaining competitive when used for planning with a state-of-the-art planner.

The proofs of all propositions are provided in Appendices \ref{proofs_lccr} and \ref{proofs_lcc}.
Appendices \ref{sec:appendix:experiments:constraints} and \ref{sec:appendix:experiments:icaps21} provide details on the benchmark we employed and additional experimental results.

\section{Background}


\subsection{Planning with Constraints in PDDL}
\label{sef:constraintspddl}

A planning problem is a tuple $\pddlProblem\val \tup{\pddlAtoms, \pddlObjects, \pddlActions, \pddlInit, \pddlGoal, \pddlConstraints}$, where $\pddlAtoms$ is a set of atoms, $\pddlObjects$ is a set of objects, $\pddlActions$ is a set of actions, $\pddlInit\subseteq \pddlAtoms$ is an initial state, $\pddlGoal$ is a formula over $\pddlAtoms$ denoting the goal of the problem, and $\pddlConstraints$ is a set of constraints. 
Each action $a\in\pddlActions$ comprises a list of arguments $\actionArgs{a}$, a precondition $\actionPrec{a}$, which is a formula over $\pddlAtoms$, and a set of conditional effects $\actionEff{a}$.
Each conditional effect in $\actionEff{a}$ is an expression $\forall \vec{z}{:}\ c\rhd e$, where $c$ is a formula, $e$ is a literal---both constructed based on the atoms in $\pddlAtoms$---and $\vec{z}$ is a set of variables.
$\condz{z}{c}{e}$ expresses that, for each possible substitution $\substz$ of the variables in $\vec{z}$ with objects from $\pddlObjects$, if the condition $c\applys{\substz}$ is true, then effect $e\applys{\substz}$ is brought about~\cite{DBLP:journals/aim/McDermott00}, where $x\applys{\theta}$ denotes the application of substitution $\theta$ in formula $x$.
We use $\actionEffPlus{a}$ (resp.~$\actionEffMinus{a}$) to denote the set of effects of an action $a$ where literal $e$ is positive (negative), and $\actioninst$ to denote an action whose arguments are grounded to objects in $\pddlObjects$, i.e., $\actionArgs{\actioninst}$ does not contain variables.
A state $s\subseteq\pddlAtoms$ contains the atoms that are true in $s$.
A ground action $\actioninst$ is applicable in state $s$ if $s\models \actionPrec{\actioninst}$, and its application yields state ${s[\actioninst]\val (s\setminus \bigcup_{c\rhd e\in \actionEffMinus{\actioninst}: s\models c} e)\cup \bigcup_{c\rhd e\in \actionEffPlus{\actioninst}: s\models c} e}$. 

We focus on a fragment of PDDL that includes $\Always$, $\Sometime$, $\AtMostOnce$, $\SometimeBefore$ and $\SometimeAfter$ constraints~\cite{DBLP:journals/ai/GereviniHLSD09}.
%
Given a sequence of states $\stateseq\val \tup{s_0, \dots, s_n}$ and first-order formulae $\phi$ and $\psi$, these constraint types are defined as follows:
\begin{compactitem}
    \item $\stateseq\models\Always(\phi)$ (or $\A(\phi)$) iff $\forall i{:}\ 0\leq i \leq n, s_i\models\phi$, i.e., $\phi$ holds in every state of $\stateseq$.
    \item $\stateseq\models\Sometime(\phi)$ ($\ST(\phi)$) iff $\exists i{:}\ 0\leq i \leq n, s_i\models\phi$, i.e., $\phi$ holds in at least one state of $\stateseq$.
    \item $\stateseq\models\AtMostOnce(\phi)$ ($\AO(\phi)$) iff $\forall i{:}\ 0\leq i\leq n$, if $s_i\models\phi$, then $\exists j{:}\ j\geq i$ such that $\forall k{:}\ i\leq k\leq j, s_k\models \phi$ and $\forall k{:}\ j<k\leq n, s_k\models \neg \phi$, i.e., $\phi$ is true in at most one continuous subsequence of $\stateseq$.
    \item $\stateseq\models\SometimeBefore(\phi, \psi)$ ($\SB(\phi, \psi)$) iff $\forall i{:}\ 0\leq i\leq n$, if $s_i\models \phi$ then $\exists j{:}\ 0\leq j< i, s_j\models \psi$, i.e., if there is a state $s_i$ where $\phi$ holds, then there is a state that is before $s_i$ in $\stateseq$ where $\psi$ holds.
    \item $\stateseq\models\SometimeAfter(\phi, \psi)$ ($\SA(\phi, \psi)$) iff $\forall i{:}\ 0\leq i\leq n$, if $s_i\models \phi$ then $\exists j{:}\ i\leq j\leq n, s_j\models \psi$, i.e., if there is a state $s_i$ where $\phi$ holds, then $\psi$ also holds at state $s_i$ or there is a state that is after $s_i$ in $\stateseq$ where $\psi$ holds.
\end{compactitem}

A plan $\plan$ for problem $\pddlProblem\val \tup{\pddlAtoms, \pddlObjects, \pddlActions, \pddlInit, \pddlGoal, \pddlConstraints}$ is a sequence of ground actions $\tup{\actioninst_0, \dots, \actioninst_{n\minus 1}}$ from $\pddlActions$ and $\pddlObjects$.
Plan $\plan$ is valid iff, for the sequence $\sigma\val \tup{s_0, \dots, s_n}$ such that $s_0\val I$ and  $s_{i\plus 1}\val s_i[\actioninst_i]$, $\forall i{:}\ 0\leq i< n$, we have $s_i\models \actionPrec{\actioninst_i}$, $\forall i{:}\ 0\leq i< n$, $s_n\models \pddlGoal$, and $\forall \constr\in\pddlConstraints{:}\ \sigma\models\constr$.
%
%

\subsection{Regression}

The regression of a formula $\phi$ through an action $a$ is the weakest condition that must hold to guarantee the satisfaction of $\phi$ after the application of $a$. Regression has been used in different contexts, e.g., situation calculus \cite{situation-regression}, numeric planning \cite{numeric-regression}, non deterministic planning \cite{DBLP:conf/ecai/Rintanen08}, and SAS$^+$ planning \cite{sas-plus-regression}. We focus on regression for classical planning as defined by \citet{DBLP:conf/ecai/Rintanen08} for ground $\phi$ and $\actioninst$.
\begin{definition}[Regression Operator]\label{def:regression}
    Consider a ground formula $\phi$ and a ground action $\actioninst$.
    Regression $R(\phi, \actioninst)$ is the formula obtained from $\phi$ by replacing every atom $f$ in $\phi$ with $\Gamma_f(\actioninst)\vee (f\wedge \neg \Gamma_{\neg f}(\actioninst))$, where the gamma operator $\Gamma_l(\actioninst)$ for a literal $l$ is defined as: $\Gamma_l(\actioninst)\val\bigvee_{\cond{c}{l}\in \actionEff{\actioninst}} c$. 
    \qeddef
\end{definition}

TCORE employs regression to compile away constraints~\cite{DBLP:conf/aips/BonassiGPS21}.
To do this, TCORE first grounds the problem and then calculates $R(\phi, \actioninst)$ for each formula $\phi$ appearing in a constraint and each action $\actioninst$.
Subsequently, TCORE introduces $R(\phi, \actioninst)$ formulae and `monitoring atoms'---whose purpose is to track the status of constraints---in action preconditions and effects, capturing the semantics of the constraints in the compiled problem.
%
%
%
%

\section{\lccr}\label{sec:lccr}


\subsection{Lifted Regression}
The idea behind constructing \lccr is to generalize the regression operator from the ground domain to the lifted domain.
Consequently, \lccr will need to compute the lifted regression operator $\Lregression(\phi,a)$ for each action $a$ and each formula $\phi$ appearing in an argument of a constraint.
We achieve this by constructing a lifted gamma operator $\LGamma_l(a)$ and adapting Definition~\ref{def:regression} accordingly.
%

Similar to $\Gamma_l(a)$, $\LGamma_l(a)$ expresses the weakest condition that leads to the satisfaction of $l$ after applying action $a$, with the difference of $\LGamma_l(a)$ operating on non-ground $l$ and $a$.
%
%

We start with the case of a formula $\phi$ whose truth value may be affected only by $\cond{c}{e}$ effects of action $a$, i.e., the only set to consider in $\condz{z}{c}{e}$ is the empty set ($\vec{z}=\emptyset$).
%
For each literal $l$ in $\phi$ and each effect $\cond{c}{e}$ of $a$, we compute the most general unifier $\xi(l,e)$ between $l$ and $e$ (if any) and derive the weakest condition $\weakestCcond_l(\cond{c}{e})$ under which an action with effect $\cond{c}{e}$ may bring about $l$.
We derive $\xi(l, e)$ via \citeauthor{DBLP:journals/jacm/Robinson65}'s resolution algorithm~\citep{DBLP:journals/jacm/Robinson65} by using the most general resolution, in the sense that variables are grounded to constants only when required.
%


\begin{definition}[Weakest Condition $\weakestCcond_l(\cond{c}{e})$] \label{def:wce}
The weakest condition $\weakestCcond_l(\cond{c}{e})$ under which effect $\cond{c}{e}$ brings about literal $l$ is:
\begin{align*}
\weakestCcond_l(\cond{c}{e})\val
\begin{cases}
c \wedge  \bigwedge_{\substack{(t_i\doteq u_i)\in \xi(l,e)}} (t_i=u_i) & \text{if } \exists \xi(l, e)\\
\bot & \text{if } \nexists \xi(l, e)
\end{cases}
\end{align*}
where $t_i$ and $u_i$ denote the arguments of the literals $l$ and $e$ respectively, and $t_i\doteq u_i$ denotes their unification.
\qeddef
\end{definition}
\begin{example}[Weakest Condition $\weakestCcond_l(\cond{c}{e})$]\label{ex:wce}
Consider the \blocksworldtwo\ domain, and action $\putdown$, which allows the agent to place a block or a two-block tower on the table.
$\putdown$ has one argument $b$ and its effects are: 
\begin{compactitem}
\item $\handemptyF$, i.e., the agent's hand is empty,
\item $\ontableF(b)$, i.e., block $b$ is on the table,
\item $\neg \holdingF(b)$, i.e., the agent is not holding $b$,
\item $\cond{\neg \towerbaseF(b)}{\clearF(b)}$, i.e., if $b$ is not the base of a block tower, then $b$ is clear. 
\end{compactitem}
Consider literals $\ontableF(b_1)$ and $\clearF(b_5)$, where $b_1$ and $b_5$ are ground.
$\ontableF(b_1)$ can only be unified with effect $\ontableF(b)$ of $\putdown$ via  $b\doteq b_1$, while $\clearF(b_5)$ can only be unified with literal $\clearF(b)$ of effect $\cond{\neg \towerbaseF(b)}{\clearF(b)}$ via $b\doteq b_5$.
Thus, we have:
\begin{align*}
&\weakestCcond_{\ontableF(b_1)}(\ontableF(b))\val (b\val b_1)\\
&\weakestCcond_{\clearF(b_5)}(\cond{\neg \towerbaseF(b)}{\clearF(b)})\val \\
&\hspace{100pt}\neg \towerbaseF(b) \wedge (b\val b_5) \tag*{\qedex}
\end{align*}
%
%
%
%
\end{example}

%
Next, we extend Definition \ref{def:wce} to handle $\condz{z}{c}{e}$ effects for a non-empty set $\vec{z}$.
In the case of such an effect, a variable $u$ appearing in an argument of $e$ may be: (i) a parameter of $a$, or (ii) a variable in $\vec{z}$.
Definition \ref{def:wce} handles case (i) by introducing equalities between the action parameters in $e$ and the corresponding arguments of $l$.
Case (ii), however, needs a different type of treatment as the scope of a variable $u\in \vec{z}$ is only within $\condz{z}{c}{e}$, and thus, if we were to introduce an equality between $u$ and an argument of $l$ in $\Lregression_{l}(a)$, then $u$ would be a free variable in $\Lregression_{l}(a)$.

To tackle this issue, we leverage the semantics of $\condz{z}{c}{e}$, according to which $\condz{z}{c}{e}$ is equivalent to introducing one effect $\cond{c\applys{\substz}}{e\applys{\substz}}$ for each possible substitution $\substz$ of the variables in $\vec{z}$ with domain objects~\cite{DBLP:journals/aim/McDermott00}. 
As a result, $e$ can be unified with $l$ iff there is a substitution $\substz(l, e)$ such that $e\applys{\substz(l, e)}$ can be unified with $l$.
If there is such a substitution, then, in order for effect $\condz{z}{c}{e}$ to bring about $l$, there needs to be an assignment to the variables of $\vec{z}$ that appear in $c$ but not in $e$, i.e., the variables in set $\zfree$, such that $c\applys{\substz(l,e)}$ becomes true.
In other words, formula $\exists \zfree\ c\applys{\theta_z(l,e)}$ needs to hold.
Then, the weakest condition $\weakestC_l(\condz{z}{c}{e})$ under which $\condz{z}{c}{e}$ brings about $l$ requires for $\exists \zfree\ c\applys{\theta_z(l,e)}$ to hold under a variable assignment that unifies $l$ and $e\applys{\substz(l, e)}$.

\begin{definition}[$z$-substitution]
Consider a literal $l$ and a conditional effect $\forall \vec{z}{:}\ c\rhd e$, such that $l$ and $e$ can be unified and $\xi(l,e)$ is their most general unifier.
We define the $z$-substitution $\theta_z(l,e)$ of $l$ and $e$ as:
\begin{align*}
\theta_z(l,e)\val \{u_i\mapsto t_i\ |\ (t_i\doteq u_i)\in\xi(l,e) \wedge u_i\in \vec{z}\}
\end{align*}
where $x\mapsto y$ denotes that term $y$ substitutes term $x$. 
\qeddef
\end{definition}
\begin{definition}[Weakest Condition $\weakestC_l(\condz{z}{c}{e})$]\label{def:wzce}
The weakest condition under which $\condz{z}{c}{e}$ brings about $l$ is:
\begin{align*}
\weakestC_l(\condz{z}{c}{e})\val
\begin{cases}
\exists \zfree\ c\applys{\theta_z(l,e)} \wedge \bigwedge\limits_{\mathclap{(t_i\doteq u_i)\in \xi(l,e)\wedge u_i\notin \vec{z}}} (t_i=u_i) & \text{if } \exists \xi(l, e)\\
\bot & \text{if } \nexists \xi(l, e)
\end{cases}
\end{align*}
where $\vec{z}^{f}\val \{z\in\vec{z}\ |\ \nexists t_i: (z\mapsto t_i)\in\theta_z(l,e)\}$. 
\qeddef
\end{definition}

According to Definition \ref{def:wzce}, when $\vec{z}$ is empty, $\weakestC_l(\condz{z}{c}{e})$ coincides with $\weakestCcond_l(\cond{c}{e})$.

\begin{example}[Weakest Condition $\weakestC_l(\condz{z}{c}{e})$]\label{ex:wzce}
Consider that literal $l$ is $\clearF(b_5)$ and suppose that action $\putdown$ additionally has the following effect: 
\begin{compactitem}
\item $\forall \{topb\}{:}\ \cond{\onF(topb,b)}{\clearF(topb)}$, i.e., any block $topb$ that is on top of block $b$ is clear.
\end{compactitem}
$\clearF(b_5)$ unifies with $\clearF(topb)$ via ${topb\doteq b_5}$, and, since $topb$ is a $\vec{z}$ variable of the effect, we have $\theta_z(\clearF(b_5), \clearF(topb))\val \{topb\mapsto b_5\}$, while $\zfree$ is empty. 
Therefore, we have:
\begin{align*}
\weakestC_l(\forall \{topb\}{:}\ \cond{\onF(topb,b)}{\clearF(topb)})\val \onF(b_5, b) \tag*{\qedex}
\end{align*}
\end{example}

\begin{definition}[Lifted Gamma Operator $\LGamma_l(a)$]\label{def:lifted_gamma}
Given an action $a$ and a literal $l$, the lifted gamma operator is defined as: $\LGamma_{l}(a)\val \bigvee_{\forall \vec{z}{:}\cond{c}{e}\in \actionEff{a}} \weakestC_l(\condz{z}{c}{e})$. 
\qeddef
\end{definition}

According to Definition \ref{def:lifted_gamma}, $\LGamma_l(a)$ is the weakest condition under which action $a$ brings about $l$ via one of its effects.

\begin{example}[Lifted Gamma Operator $\LGamma_l(a)$]\label{ex:lifted_gamma}
Based on the results in Examples \ref{ex:wce} and \ref{ex:wzce}, We have:
\begin{align*}
\LGamma_{\ontableF(b_1)}(\putdown)\val &(b\val b_1)\\
\LGamma_{\clearF(b_5)}(\putdown)\val &(\neg \towerbaseF(b) \wedge (b\val b_5)) \\
&\vee \onF(b_5, b) \tag*{\qedex}
\end{align*}
\end{example}

\begin{definition}[Lifted Regression Operator $\Lregression(\phi, a)$]\label{def:lifted_regression}
    Consider a first-order formula $\phi$ and an action $a$.
    The lifted regression $\Lregression(\phi, a)$ is the formula obtained from $\phi$ by replacing every atom $f$ in $\phi$ with $\LGamma_f(a)\vee (f\wedge \neg \LGamma_{\neg f}(a))$.
    \qeddef
\end{definition}

According to Definition \ref{def:lifted_regression}, $\Lregression(\phi, a)$ is the weakest condition under which $\phi$ is true after the execution of action $a$.

\begin{example}[Lifted Regression $\Lregression(\phi, a)$]\label{ex:lifted_regression}
Based on the results in Example \ref{ex:lifted_gamma}, and since $\LGamma_{\neg \ontableF(b_1)}(\putdown)\val \bot$ and $\LGamma_{\neg \clearF(b_5)}(\putdown)\val \bot$, we have:
\begin{align*}
&\Lregression(\ontableF(b_1), \putdown)\val (b\val b_1) \vee \ontableF(b_1)\\
&\Lregression(\clearF(b_5), \putdown)\val \onF(b_5, b) \vee \clearF(b_5) \vee \\
&\hspace{100pt}(\neg \towerbaseF(b) \wedge (b\val b_5)) \tag*{\qedex}
\end{align*}
\end{example}

\subsection{The \lccr Compiler}
\lccr follows the same steps as the TCORE compiler~\cite{DBLP:conf/aips/BonassiGPS21}, with the exception of using lifted regression.
As in TCORE, the main intuition behind \lccr is to foresee that an action $a$ will affect the truth value of a formula $\phi$ appearing in a constraint by checking whether $\Lregression(\phi, a)$ holds.
If it does hold, then \lccr sets a so-called `monitoring atom', in order to express the status of $\phi$ after an execution of $a$.
Contrary to TCORE, \lccr introduces the necessary regression formulae and monitoring atoms without grounding the problem, thus avoiding the combinatorial explosion induced by grounding. 

Algorithm \ref{alg:lifted_tcore} outlines the steps of \lccr. 
First, we check if an $\A$ or $\SB$ constraint (c.f. Section~\ref{sef:constraintspddl}) is violated in the initial state (line \ref{line:lccr_check_unsolvable}).
Since their violation is irrevocable, we deem the problem unsolvable (line \ref{line:lccr_unsolvable}).
Next, we introduce the monitoring atoms required to track the constraints of the problem (line \ref{line:lccr-F'}). 
$hold_c$ expresses that constraint $c$ is satisfied based on the state trajectory induced so far.
$seen_\phi$ expresses that there is a state in the induced trajectory where $\phi$ holds.
$hold_c$ atoms are used to monitor $\ST$ and $\SA$ constraints, while $seen_\phi$ atoms are required for $\SB$ and $\AO$ constraints.
Constraints of type $\A$ do not demand monitoring atoms.
We augment the initial state $I$ with the monitoring atoms that are satisfied in $I$ (line \ref{line:lccr-I'}).

Afterwards, for each action $a$, we compute the preconditions and the effects that need to be added in $a$ in order to capture the semantics of the constraints within the compiled problem (lines \ref{line:lccr_foraction}--\ref{line:lccr_addeff}).
For a $\A(\phi)$ constraint, we need to add precondition $\Lregression(\phi, a)$ in $a$, stating that $\phi$ needs to be true after the execution of $a$ (line \ref{line:lccr_case_a}).
In the case of $\ST(\phi)$, we activate monitoring atom $hold_{\hspace{1pt}\ST(\phi)}$ if $\phi$ is true after executing $a$ (line \ref{line:lccr_case_st}).
For $\AO(\phi)$, we set $seen_\phi$ when $a$ brings about $\phi$ (line \ref{line:lccr_case_ao_e}) and forbid the execution of $a$ when $seen_\phi$ holds, $\phi$ is false and $a$ would bring about $\phi$ (line \ref{line:lccr_case_ao_p}), thus prohibiting $\phi$ from occurring more than once.
For $\SB(\phi, \psi)$, we set $seen_\psi$ when $a$ brings about $\psi$, and forbid the execution of $a$ when it would bring about $\phi$ while $seen_\phi$ does not hold.
For $\SA(\phi, \psi)$, we set $hold_{\hspace{1pt}\SA(\phi, \psi)}$ if $a$ brings about $\psi$, and $\neg hold_{\hspace{1pt}\SA(\phi, \psi)}$ if $a$ brings about a state where $\phi\wedge \neg \psi$ holds.
Lastly, we require that all $hold_c$ monitoring atoms need to be true in a goal state, reflecting that all $\ST$ and $\SA$ must have been satisfied by the end of the plan (line \ref{line:lccr_return}).

\begin{algorithm}[t]
\caption{\lccr}\label{alg:lifted_tcore}
\small

\begin{algorithmic}[1]
    \Require Planning problem $\Pi = \tup{F, \pddlObjects, A, I, G, \constrThree}$.
    \Ensure Planning problem $\Pi' = \tup{F', \pddlObjects, A', I', G', \emptyset}$

    \If{$\exists \A(\phi)\in\constrThree: I\models\neg \phi \vee \exists \SB(\phi, \psi)\in\constrThree: I\models \phi$} \label{line:lccr_check_unsolvable}
    \State \textbf{return} Unsolvable Problem \label{line:lccr_unsolvable}
    \EndIf
    
    \State $F' \gets F \cup\ \bigcup\limits_{\mathclap{\substack{c : \ST(\phi) \in \constrThree\ \vee \\ c : \SA(\phi, \psi) \in \constrThree}}}\set{hold_c}  \; \cup\ \bigcup\limits_{\mathclap{\SB(\phi, \psi) \in \constrThree}} \set{seen_\psi} \cup\ \bigcup\limits_{\mathclap{\AO(\phi) \in \constrThree}}\set{seen_\phi}$ \label{line:lccr-F'}

    \State $I' {\gets} I \cup\ \bigcup\limits_{\mathclap{c : \ST(\phi) \in \constrThree}}\set{hold_c\ |\ I\models \phi} \cup\ \bigcup\limits_{\mathclap{c : \SA(\phi, \psi) \in \constrThree}}\set{hold_c\ |\ I\models \psi{\vee} \neg \phi}$ \label{line:lccr-I'}

    \Statex \hspace{-4pt}$\cup\ \bigcup\limits_{\mathclap{\SB(\phi, \psi) \in \constrThree}} \set{seen_\psi\ |\ I\models \psi} \cup\ \bigcup\limits_{\mathclap{\AO(\phi) \in \constrThree}}\set{seen_\phi\ |\ I\models \phi}$

    \ForAll{$ a \in A $}\label{line:lccr_foraction}
        \State $P, E \gets \compileCR(a, \constrThree)$ \label{line:lccr_compile}
        \State $\actionPrec{a}\gets \actionPrec{a} \wedge \bigwedge_{p \in P}p$ \label{line:lccr_addprec}
        \State $\actionEff{a}\gets \actionEff{a} \cup E$ \label{line:lccr_addeff}
    \EndFor
    
    \State\textbf{return} $\tup{F', \pddlObjects, A, I',  G \wedge \bigwedge_{hold_c \in F'} hold_{c}, \emptyset}$ \label{line:lccr_return}
    \vspace{5pt}
    \Function{\compileCR}{$a, \constrThree$}
    \State $P, E = \{\}, \{\}$ \label{line:lccr_initpe}
    \For{$c \in \constrThree $} \label{line:lccr_forc}
    
        \If{$c$ is $\A(\phi)$}~$P \gets P \cup \{\Lregression(\phi, a)\}$ \label{line:lccr_case_a}

        \ElsIf{$c$ is $\ST(\phi)$} \State $E \gets E \cup \{\cond{\Lregression(\phi, a)}{\set{hold_c}}\}$ \label{line:lccr_case_st}
            
        \ElsIf{$c$ is $\AO(\phi)$} \label{line:lccr_case_ao}
            \State $E \gets E \cup \{\cond{\Lregression(\phi, a)}{\set{seen_\phi}}\}$  \label{line:lccr_case_ao_e}
            \State $P \gets P \cup \{\neg(seen_\phi \wedge \neg\phi \wedge \Lregression(\phi, a))\}$ \label{line:lccr_case_ao_p}
             
        \ElsIf{$c$ is $\SB(\phi, \psi)$} \label{line:lccr_case_sb}
        \State $E \gets E \cup \{\cond{\Lregression(\psi, a)}{\set{seen_\psi }}\}$ \label{line:lccr_case_sb_e}
        \State $P \gets P \cup \{\Lregression(\phi, a) \rightarrow seen_{\psi}\}$ \label{line:lccr_case_sb_p}

        \ElsIf{$c$ is $\SA(\phi, \psi)$} \label{line:lccr_case_sa}
        \State $E \gets E \cup \set{\cond{\Lregression(\psi, a)}{\set{hold_c}}} $ \label{line:lccr_case_sa_e}

        \Statex \quad\quad\quad\hspace{3.7pt}$\cup \set{{\Lregression(\phi, a) {\wedge} \neg \Lregression(\psi, a){\rhd} \set{\neg hold_c}}}$
        \EndIf        
    \EndFor
    \State \textbf{return} $P$, $E$ \label{line:lccr_returnpe}
   \EndFunction 
\end{algorithmic}
\end{algorithm}


\begin{example}[\lccr]\label{ex:lccr}
Consider a \blocksworldtwo\ problem with the following constraints: $\ST(\clearF(b_5))$, $\AO(\ontableF(b_1))$, $\SB(\clearF(b_5), \exists topb: \onF(topb, b_3))$.
To compile away these constraints, \lccr\ introduces monitoring atoms $hold_{\hspace{1pt}\ST(\clearF(b_5))}$, $seen_\psi$ and $seen_\phi$, where $\psi$ is $\exists topb: \onF(topb, b_3)$ and $\phi$ is $\ontableF(b_1)$, and updates the initial state with these atoms according to line \ref{line:lccr-I'}.
Subsequently, \lccr introduces a set of action-specific preconditions and effects.
For action $\putdown$, e.g., \lccr adds preconditions $P$ and effects $E$, i.e.:
\begin{align*}
P\val \{ &\Lregression(\clearF(b_5),\putdown)\rightarrow seen_\psi, \\
         &\neg (seen_\phi\wedge \neg \phi \wedge \Lregression(\ontableF(b_1), \putdown)\}\\
E\val \{ &\cond{\Lregression(\clearF(b_5),\putdown)}{hold_{\hspace{1pt}\ST(\clearF(b_5))}}, \\
         &\cond{\Lregression(\ontableF(b_1), \putdown)}{seen_\phi}\}
\end{align*}
where expressions $\Lregression(\ontableF(b_1), \putdown)$ and $\Lregression(\clearF(b_5),\putdown)$ were derived in Example \ref{ex:lifted_regression}.
\qedex
\end{example}

According to line \ref{line:lccr_case_sb_e} of Algorithm \ref{alg:lifted_tcore}, set $E$ of Example~\ref{ex:lccr} should have included effect $\cond{\Lregression(\psi, \putdown)}{seen_\psi}$.
The reason for its omission was based on the fact that $\Lregression(\psi, \putdown)\val\psi$, which implies that $\psi$ holds after the execution of action $\putdown$ in a state $s$ iff $\psi$ holds at $s$.
Therefore, if $\Lregression(\psi, \putdown)\psi$ holds, then there is an earlier action $a$, where $\Lregression(\psi, a)\nval \psi$, that made $\psi$ true, and thus set atom $seen_\psi$.
Thus, it would be redundant to reinstate $seen_\psi$ when $\putdown$ is executed.
By omitting regression formulae such that $\Lregression(\phi, a)\val \phi$, \lccr compresses the compiled specifications without sacrificing correctness.

\subsection{Theoretical Properties}
We prove that \lccr\ is correct, i.e., a problem has the same solutions as its compiled version produced by \lccr, and outline its complexity.
Towards correctness, we first demonstrate a property of lifted regression.

\begin{restatable}[$\Lregression$ Correctness]{lemma}{propliftedregrcorrectness}
\label{prop:lifted_regr_correctness}
Consider a state $s$, an action $a$, a closed first-order formula $\phi$, and a ground incarnation $\actioninst$ of $a$ such that $\actioninst$ is applicable in $s$.
$\substainst$ denotes the substitution of the arguments of $a$ with those of $\actioninst$.
We have:
\begin{align*}
    s\models \Lregression(\phi,a)\applys{\substainst} \iff s[\actioninst]\models\phi \tag*{\qedprop}
\end{align*}
  \end{restatable}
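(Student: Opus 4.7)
The plan is to prove the equivalence by structural induction on the closed formula $\phi$, reducing to the atomic case, and then to derive that case from Rintanen's already-known correctness of the ground regression operator $R$. For the propositional connectives, both $\Lregression$ and the semantic bracket commute with $\wedge$, $\vee$, $\neg$, so the inductive hypothesis suffices. For quantifiers, I would unfold $\forall x.\,\psi(x)$ into $\bigwedge_{o\in \pddlObjects}\psi(o)$ (and $\exists x$ into a disjunction); since the bound variables of $\phi$ are disjoint from the parameters of $a$, applying $\substainst$ commutes with this unfolding. This reduces the claim to the case where $\phi$ is a ground literal $l$.

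For such a ground literal $l$, Definitions~\ref{def:regression} and \ref{def:lifted_regression} have parallel structure, and $l\applys{\substainst} = l$ since $l$ contains no parameters of $a$. Hence it suffices to prove, for every such $l$, the key equivalence
\begin{align*}
s \models \LGamma_l(a)\applys{\substainst} \iff s \models \Gamma_l(\actioninst),
\end{align*}
together with its counterpart for $\neg l$. Once this is established, Rintanen's soundness of $R$ on the ground literal $l$ lifts through the inductive cases to yield the desired biconditional $s \models \Lregression(\phi,a)\applys{\substainst} \iff s[\actioninst] \models \phi$.

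The core technical step is this key equivalence, which I would prove effect by effect. For each conditional effect $\condz{z}{c}{e}$ of $a$, the goal is to show
\begin{align*}
s \models \weakestC_l(\condz{z}{c}{e})\applys{\substainst} \iff \exists \substz.\; e\applys{\substainst\substz} = l \,\wedge\, s \models c\applys{\substainst\substz},
\end{align*}
whose right-hand side is exactly the condition under which the expansion of $\condz{z}{c}{e}$ into ground effects of $\actioninst$ contains an effect of the form $c'\rhd l$ with $s\models c'$, i.e., a disjunct of $\Gamma_l(\actioninst)$. If the MGU $\xi(l,e)$ does not exist, no substitution unifies $e$ with $l$, so both sides collapse to $\bot$. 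If $\xi(l,e)$ exists, I would split its equalities into those involving parameters of $a$ (handled by $\substainst$ and the conjunction of equalities in Definition~\ref{def:wzce}) and those binding variables of $\vec{z}$ occurring in $e$ (captured exactly by $\theta_z(l,e)$).

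The main obstacle, I expect, is this last piece of bookkeeping: carefully verifying that the MGU, together with the split between action parameters and $\vec{z}$-variables, corresponds precisely to quantification over $\substz$ in the ground semantics of $\condz{z}{c}{e}$. In particular, one must check that $\exists \zfree$ in Definition~\ref{def:wzce} quantifies over exactly the $\vec{z}$-variables that remain free after unification---those appearing in $c$ but not in $e$---since the $\vec{z}$-variables appearing in $e$ are already pinned down by matching with the ground literal $l$, and the parameters of $a$ are pinned down by $\substainst$.
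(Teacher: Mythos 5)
Your proposal is correct and follows essentially the same route as the paper's proof: structural induction on $\phi$ with commutation through connectives and quantifiers, reducing to an atom, and then an effect-by-effect argument showing that $\weakestC_l(\condz{z}{c}{e})\applys{\substainst}$ holds in $s$ exactly when the grounding of that effect in $\actioninst$ yields a satisfied conditional effect producing $l$ --- which is precisely the paper's key equivalence $(\alpha)$, including the same bookkeeping that splits the MGU into action-parameter equalities versus $\vec{z}$-variable bindings and leaves $\exists\zfree$ over the $\vec{z}$-variables occurring only in $c$. The sole (cosmetic) difference is that you finish the atomic case by invoking Rintanen's ground correctness of $R$ via the bridge $s\models\LGamma_l(a)\applys{\substainst}\iff s\models\Gamma_l(\actioninst)$, whereas the paper unfolds the definition of $s[\actioninst]$ directly and argues membership in the add/delete sets; both are valid.
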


\begin{restatable}[Correctness of \lccr]{proposition}{proplccrcorrectness}
\label{prop:lccr_correctness}
If \lccr compiles a problem $\pddlProblem$ into problem $\pddlProblem'$, then a plan $\plan$ is valid for $\pddlProblem$ iff $\plan$ is valid for $\pddlProblem'$. 
\qedprop
\end{restatable}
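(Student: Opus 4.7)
The plan is to prove both directions of the equivalence by fixing an arbitrary plan $\plan\val\tup{\actioninst_0,\ldots,\actioninst_{n-1}}$ and comparing, step by step, the original trajectory $\sigma\val\tup{s_0,\ldots,s_n}$ induced by $\plan$ in $\pddlProblem$ with the augmented trajectory $\sigma'\val\tup{s'_0,\ldots,s'_n}$ induced by $\plan$ in $\pddlProblem'$. The central tool is Lemma~\ref{prop:lifted_regr_correctness}, which ensures that for any $a$, closed formula $\phi$, and ground incarnation $\actioninst$ of $a$ that is applicable in $s$, we have $s\models\Lregression(\phi,a)\applys{\substainst}$ iff $s[\actioninst]\models\phi$. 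I would then prove by induction on $i$ the invariant that (i) $s'_i\cap\pddlAtoms\val s_i$ and the preconditions of $\actioninst_i$ inherited from $\pddlActions$ hold at $s'_i$ iff they hold at $s_i$; and (ii) every monitoring atom in $s'_i$ faithfully encodes the corresponding temporal property of the prefix $\tup{s_0,\ldots,s_i}$.

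Step (ii) is handled by a case split on the constraint type. For $\A(\phi)$, the initial-state check at line~\ref{line:lccr_check_unsolvable} enforces $s_0\models\phi$, while the precondition added at line~\ref{line:lccr_case_a}, via Lemma~\ref{prop:lifted_regr_correctness}, enforces $s_{i+1}\models\phi$ at every step. For $\ST(\phi)$ and $\SA(\phi,\psi)$, I would show that the $hold_c$ atom is in $s'_i$ exactly when the obligation is "discharged" by the prefix; paired with the extended goal $G\wedge\bigwedge_{hold_c\in F'}hold_c$ at line~\ref{line:lccr_return}, this yields satisfaction at the end of $\plan$. For $\AO(\phi)$ and $\SB(\phi,\psi)$, the $seen_\phi$ and $seen_\psi$ atoms are shown to record whether $\phi$, respectively $\psi$, has been true somewhere in the prefix (with base case given by line~\ref{line:lccr-I'}); the preconditions at lines~\ref{line:lccr_case_ao_p} and~\ref{line:lccr_case_sb_p}, again via Lemma~\ref{prop:lifted_regr_correctness}, forbid precisely the transitions that would violate the corresponding constraint.

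The main obstacle is the $\SA(\phi,\psi)$ case, because $hold_c$ can be both added (when $\actioninst_i$ brings about $\psi$) and removed (when it brings about $\phi\wedge\neg\psi$) by the effects at line~\ref{line:lccr_case_sa_e}. The invariant to maintain is that $hold_c\in s'_i$ iff every $j\leq i$ with $s_j\models\phi$ is matched by some $k$ with $j\leq k\leq i$ and $s_k\models\psi$. Proving this requires exploiting the simultaneous PDDL semantics of conditional effects---both conditions are evaluated at the pre-state $s'_i$, not in sequence---so that if $s_{i+1}\models\psi$ then $hold_c\in s'_{i+1}$ regardless of whether $\phi\wedge\neg\psi$ also held at some point, and conversely if $s_{i+1}\models\phi\wedge\neg\psi$ then $hold_c\notin s'_{i+1}$. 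Once this invariant is nailed down, the goal requirement closes both directions at once: a valid plan for $\pddlProblem$ reaches a state in which every pending $\SA$ obligation is discharged, and conversely a plan reaching such a state in $\pddlProblem'$ certifies that $\sigma\models\SA(\phi,\psi)$, completing the induction and hence the proof.
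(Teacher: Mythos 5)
Your proposal is correct, and it rests on the same essential ingredients as the paper's proof---Lemma~\ref{prop:lifted_regr_correctness}, the observation that the compiled actions differ only on monitoring atoms so that $s_i$ and $s'_i$ agree on every formula over the original atoms, and a case split over the five constraint types---but it organizes them into a genuinely different argument. The paper proves the two directions separately, each by contradiction: it assumes $\plan$ solves one problem but not the other, locates the failing precondition, goal conjunct, or constraint, and refutes it case by case; along the way it asserts characterizations such as ``$seen_\phi$ is true in $s'_i$ iff $\phi$ held in some state up to $s'_i$'' without an explicit induction. You instead make those characterizations the centerpiece: a single induction on $i$ establishing that every monitoring atom encodes the corresponding property of the prefix $\tup{s_0,\dots,s_i}$, after which both directions fall out by reading the added preconditions and the strengthened goal as statements about $\sigma$. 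This costs more up front---your invariant for $\SA(\phi,\psi)$, which you rightly single out as the delicate case since $hold_c$ is both added and deleted, must be stated and maintained explicitly, although the two triggering conditions $\Lregression(\psi,a)$ and $\Lregression(\phi,a)\wedge\neg\Lregression(\psi,a)$ are mutually exclusive, so there is no add--delete conflict to resolve---but it pays off by proving the bookkeeping facts once instead of re-deriving them inside each contradiction branch, and it handles both directions with one induction. One point to keep explicit in the forward direction: since the compiled preconditions are strictly stronger, executability of $\plan$ in $\pddlProblem'$ is not automatic; the induction must carry, at every step $i$, the conclusion that constraint satisfaction of the prefix of $\sigma$ together with invariant (ii) forces the added preconditions of $\actioninst_i$ to hold at $s'_i$ (and, for $\A$ and $\SB$, that validity of $\plan$ in $\pddlProblem$ prevents the unsolvability branch at line~\ref{line:lccr_check_unsolvable} from firing), rather than invoking the invariant only at the goal.
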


\begin{restatable}[Complexity of \lccr]{proposition}{proplccrcomplexity}
\label{prop:lccr_complexity}
Assuming that the nesting depth of quantifiers in constraint formulae is bounded by constant $b$, 
the worst-case time complexity of compiling a problem $\pddlProblem$ with \lccr is $\mathcal{O}(n_cn_f^b\plus n_a n_c n^2_f n_k)$, where $n_c$, $n_f$, $n_a$ and $n_k$ denote, respectively, the number of constraints, the number of atoms, the number of actions and the maximum atom arity in $\pddlProblem$.
\qedprop
\end{restatable}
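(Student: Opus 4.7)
The plan is to bound the running time of Algorithm~\ref{alg:lifted_tcore} line by line, splitting the analysis into the initial-state sanity checks (lines~\ref{line:lccr_check_unsolvable}--\ref{line:lccr-I'}) and the main compilation loop (lines~\ref{line:lccr_foraction}--\ref{line:lccr_addeff}). The two summands in the claimed bound correspond exactly to these two phases.

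For the initial-state phase, I would argue that evaluating any constraint formula $\phi$ against the ground state $I$ costs $O(n_f^b)$: each of the at most $b$ nested quantifiers instantiates a variable over at most $n_f$ candidate values, and, once fully instantiated, an atomic literal is checked against $I$ in constant time via hashed lookup. Summing over the $n_c$ constraints yields $O(n_c n_f^b)$ for the checks on line~\ref{line:lccr_check_unsolvable} and the seeding of $I'$ on line~\ref{line:lccr-I'}; building $F'$ on line~\ref{line:lccr-F'} contributes only a lower-order $O(n_c)$.

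For the main loop, the dominating cost per (action, constraint) pair is computing $\Lregression(\phi, a)$ and, when applicable, $\Lregression(\psi, a)$. By Definition~\ref{def:lifted_regression}, this amounts to substituting each of the $O(n_f)$ atoms of $\phi$ with an expression built from $\LGamma_f(a)$ and $\LGamma_{\neg f}(a)$; by Definition~\ref{def:lifted_gamma}, each $\LGamma_l(a)$ iterates over the $O(n_f)$ conditional effects of $a$ and, per effect, performs a single most-general unification of cost $O(n_k)$ via Robinson's algorithm. This gives $O(n_f n_k)$ per gamma, $O(n_f^2 n_k)$ per regression, and $O(n_a n_c n_f^2 n_k)$ after multiplying by the $n_a$ actions and $n_c$ constraints handled inside \compileCR. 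Adding the two phases yields the stated $\mathcal{O}(n_c n_f^b \plus n_a n_c n_f^2 n_k)$ bound.

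The main obstacle is justifying the implicit size assumptions so that the $n_f$ factors absorb everything cleanly: specifically, that $|\Obj|$, the number of conditional effects per action, and the number of distinct atoms occurring inside any constraint formula are all bounded by $n_f$, and that the $\cond{c}{e}$ versus $\condz{z}{c}{e}$ dichotomy in Definitions~\ref{def:wce} and~\ref{def:wzce} does not inflate the weakest-condition computation beyond $O(n_k)$ (since the extra existential prefix in the $\vec{z}$ case is constructed symbolically and not evaluated at compile time). None of these require new ideas, but they must be made explicit before the two contributions can be summed into the claimed bound.
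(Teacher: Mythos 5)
Your proposal is correct and follows essentially the same route as the paper's proof: the same two-phase decomposition into initial-state model checking at cost $\mathcal{O}(n_c n_f^b)$ and the per-(action, constraint) regression computation at cost $\mathcal{O}(n_a n_c n_f^2 n_k)$, with the same per-unification bound of $\mathcal{O}(n_k)$ and the same bounding of effect counts and atom occurrences by $n_f$. The only cosmetic difference is that the paper obtains the $\mathcal{O}(n_f^b)$ model-checking bound by citing \citet{DBLP:conf/stoc/Vardi82} rather than by your direct quantifier-enumeration argument.
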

%
Allowing an unbounded quantifier nesting in constraint formulae makes the step of checking whether the initial state models such a formula (line \ref{line:lccr_check_unsolvable}) PSPACE-complete~\cite{DBLP:conf/stoc/Vardi82}, dominating the complexity of \lccr.
In practice, however, the quantifier nesting depth is shallow.

\section{The Lifted Constraint Compiler}\label{sec:lcc}

%
Next, we discuss the `Lifted Constraint Compiler' (\lcc).
\lcc does not use lifted regression, and works by computing a set of preconditions $P$ and a set of effects $E$ that are \emph{independent} from actions.
Effects $E$ record the status of constraints: when an action $a$ is executed in a state $s$, $E$ will introduce in the next state $s'$ information regarding the status of the constraints in $s$.
The purpose of preconditions $P$ is to prevent the execution of further actions in a state $s$ when a constraint has been violated. 
These new preconditions and effects are shared among all actions.
In this way, \lcc monitors constraint violation without the need for regression, i.e., instead of foreseeing that an action $a$ will affect a formula $\phi$ of a constraint via the regression of $\phi$ through $a$, \lcc allows the application of any executable action $a$ but then blocks subsequent state expansion if $a$ led to constraint violation.
Following this schema, when a state $s$ where the goal is satisfied is reached, there has not been an earlier check on whether the constraints are satisfied in $s$.
To address this, \lcc introduces a new action $\endaction$ whose purpose is to verify that a state of the plan that satisfies the goal also satisfies the constraints.
To enforce that $\endaction$ is the last action executed, we use a new atom $end$ that marks the end of the plan.

%

\begin{algorithm}[t]
\caption{\lcc}
\label{alg:lifted_no_regr}
\begin{algorithmic}[1]
    \small
    \Require Planning Problem $\pddlProblem\val \tup{\pddlAtoms, \pddlObjects, \pddlActions, \pddlInit, \pddlGoal, \pddlConstraints}$.
    \Ensure Planning Problem $\pddlProblem\val \tup{\pddlAtoms', \pddlObjects, \pddlActions', \pddlInit', \pddlGoal', \emptyset}$.
    \State $F' \gets F \cup\ \bigcup\limits_{\mathclap{\substack{c : \ST(\phi) \in \constrThree\ \vee \\ c : \SA(\phi, \psi) \in \constrThree}}}\set{hold_c} \cup\bigcup\limits_{\mathclap{\AO(\phi) \in \constrThree}}\set{seen_\phi, prevent_\phi} \cup $ \label{line:lcc_fprime}

    \quad\quad\hspace{7pt}$\bigcup\limits_{\mathclap{\SB(\phi, \psi) \in \constrThree}} \set{seen_\psi} \cup \{end\}$    

    
    \State $P, E \gets \compileC(\constrThree)$ \label{line:lcc_compile}
    \ForAll{$ a \in A $}\label{line:lcc_foraction}
        \State $\actionPrec{a}\gets \actionPrec{a} \wedge \bigwedge_{p \in P}p \wedge \neg end$ \label{line:lcc_addprec}
        \State $\actionEff{a}\gets \actionEff{a} \cup E$ \label{line:lcc_addeff}
		
    \EndFor
    \State $\actionPrec{\endaction} \gets P \wedge \neg end$ \label{line:lcc_addendprec}
    \State $\actionEff{\endaction} \gets E \cup \{end\}$ \label{line:lcc_addendeff}
    \State $A' \gets A \cup \{\endaction\}$ \label{line:lcc_addendaction}
    \State \textbf{return} $\tup{F', \pddlObjects, A', I \cup \bigcup\limits_{\mathclap{\SA(\phi, \psi) \in \constrThree}}\set{hold_c},  G \wedge \bigwedge\limits_{\mathclap{\substack{c : \ST(\phi) \in \constrThree\ \vee \\ c : \SA(\phi, \psi) \in \constrThree}}} hold_{c} \wedge end, \emptyset}$ \label{line:lcc_return}
    \Function{\compileC}{$\constrThree$} \label{line:lcc_function}
    \State $P, E \gets \{\}, \{\}$ \label{line:lcc_initpe}
    \ForAll{$c \in \constrThree $} \label{line:lcc_forconstr}
    
        \If{$c$ is $\A(\phi)$}~$P \gets P \cup \{\phi\}$ \label{line:lcc_a_prec}

        \ElsIf{$c$ is $\ST(\phi)$}~$E \gets E \cup \{\cond{\phi}{hold_c}\}$  \label{line:lcc_st_eff}
            
        \ElsIf{$c$ is $\AO(\phi)$} \label{line:lcc_amo}
             \State $E \gets E \cup \{\cond{\phi}{seen_\phi}\}$  \label{line:lcc_amo_eff1}
             \State $E \gets E \cup \{\cond{(\neg \phi \wedge seen_\phi)}{prevent_\phi}\}$ \label{line:lcc_amo_eff2} 
             \State $P \gets P \cup \{\neg(\phi \wedge prevent_\phi)\}$ \label{line:lcc_amo_prec}
             
        \ElsIf{$c$ is $\SB(\phi, \psi)$} \label{line:lcc_sb}
             \State $E \gets E \cup \{\cond{\psi}{seen_\psi}\}$ \label{line:lcc_sb_eff}
              \State $P \gets P \cup \{\phi \rightarrow seen_{\psi}\}$ \label{line:lcc_sb_prec}

        \ElsIf{$c$ is $\SA(\phi, \psi)$} \label{line:lcc_sa}
             \State $E \gets E \cup \set{\cond{(\phi \wedge \neg \psi)}{\neg hold_{c}}} \cup \{\cond{\psi}{hold_c}\}$ \label{line:lcc_sa_eff}

        \EndIf        
    \EndFor
    \State \textbf{return} $P$, $E$ \label{line:lcc_returnpe}

\EndFunction

\end{algorithmic}

\end{algorithm}

Algorithm \ref{alg:lifted_no_regr} outlines the steps of \lcc. Initially, the compilation creates the necessary monitoring atoms to track the status of the constraints (line \ref{line:lcc_fprime}).
\lcc requires the same monitoring atoms used in \lccr, plus an additional atom $prevent_\phi$ for every $\AO(\phi)$.
%
This atom express that $\phi$ is false after having been true, and thus should be prevented from becoming true again in order to satisfy $\AO(\phi)$.
%

\par Next, \lcc determines the set of preconditions $P$ and the set of effects $E$ to be added to every action (line \ref{line:lcc_compile}) by iterating over each constraint $c$ (lines \ref{line:lcc_forconstr}-\ref{line:lcc_sa_eff}). 
If $c$ is $\A(\phi)$, \lcc adds precondition $\phi$ in all actions, ensuring that no progress can be made when we have $\neg \phi$ (line \ref{line:lcc_a_prec}).
If $c$ is $\ST(\phi)$, \lcc captures the satisfaction of $c$ by making $hold_c$ true when $\phi$ holds (line \ref{line:lcc_st_eff}).
If $c$ is $\AO(\phi)$, \lcc brings about $seen_\phi$ when $\phi$ is true (line \ref{line:lcc_amo_eff1}), and $prevent_\phi$ when $\phi$ is false after having been true in the past (line \ref{line:lcc_amo_eff2}).
Then, \lcc prevents the execution of any action when both $\phi$ and $prevent_\phi$ are true, a situation that violates $\AO(\phi)$ (line \ref{line:lcc_amo_prec}).
If $c$ is $\SB(\phi, \psi)$, \lcc sets $seen_\psi$ when $\psi$ becomes true, and prevents further actions when $\phi$ is true and $seen_\psi$ is false (lines \ref{line:lcc_sb_eff}-\ref{line:lcc_sb_prec}).
Lastly, if $c$ is $\SA(\phi, \psi)$, \lcc  activates $hold_c$ when $\psi$ is true, and deactivates it when $\phi$ is true and $\psi$ is false (line \ref{line:lcc_sa_eff}), thus expressing that $\SA(\phi, \psi)$ is violated only when $\phi$ held at some point in the past and $\psi$ has not become true since then.
%

Preconditions $P$ and effects $E$ are added to every action of the problem, including the newly introduced \endaction action (lines \ref{line:lcc_foraction}-\ref{line:lcc_addendeff}).
To ensure that no further action is executable after $\endaction$, we add precondition $\neg end$ to every action, and $end$ as an effect of \endaction. 
As a last step, \lcc determines the new goal, i.e., $G$ augmented with the conjunction of all $hold_c$ atoms and atom $end$, and the new initial state, where every $hold_c$ atom for a $\SA(\phi, \psi)$ constraint is set to true, maintaining correctness in the case where both $\phi$ and $\psi$ are always false.

\begin{example}[\lcc\ Compiler]\label{ex:lcc}
%
%
%
\lcc compiles the problem in Example \ref{ex:lccr} using the monitoring atoms $hold_{\ST(\clearF(b_5))}$, $seen_\psi$, $seen_\phi$ and $prevent_\phi$, where $\psi$ is $\exists topb: \onF(topb, b_3)$ and $\phi$ is $\ontableF(b_1)$, and then extends all actions with the following preconditions $P$ and effects $E$:
\begin{align*}
P\val \{ &\clearF(b_5)\rightarrow seen_\psi, \neg (\ontableF(b_1)\wedge prevent_\phi)\}\\
E\val \{ &\cond{\clearF(b_5)}{hold_{c_1}}, \cond{\exists topb: \onF(topb, b_3)}{seen_\psi}, \\
         &\cond{\ontableF(b_1)}{seen_\phi}, \\
         &\cond{(\neg \ontableF(b_1)\wedge seen_\phi)}{prevent_\phi}\} \tag*{\qeddef}
\end{align*}
\end{example}


\begin{restatable}[Correctness of \lcc]{proposition}{proplcccorrectness}
\label{prop:lcc_correctness}
If \lcc compiles a problem $\pddlProblem$ into problem $\pddlProblem'$, then plan $\tup{\actioninst_0, \dots, \actioninst_{n\minus 1}}$ is valid for $\pddlProblem$ iff plan $\tup{\actioninst_0, \dots, \actioninst_{n\minus 1}, \endaction}$ is valid for $\pddlProblem'$, and only plans ending with action $\endaction$ are valid for $\pddlProblem'$. 
\qedprop
\end{restatable}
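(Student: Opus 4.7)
The plan is to establish a tight correspondence between the state trajectory $\sigma = \langle s_0, \dots, s_n\rangle$ induced by $\pi$ in $\pddlProblem$ and the trajectory $\sigma' = \langle s'_0, \dots, s'_{n+1}\rangle$ induced by the augmented plan $\pi' = \langle \actioninst_0, \dots, \actioninst_{n-1}, \endaction\rangle$ in $\pddlProblem'$. First, I would observe that the preconditions $P$ and effects $E$ added by \compileC to every action, including $\endaction$, involve only the newly introduced monitoring atoms $hold_c$, $seen_\phi$, $seen_\psi$, $prevent_\phi$ and $end$. Hence the projection of $s'_i$ onto $\pddlAtoms$ equals $s_i$ for every $i\le n$, and $s'_{n+1}$ projected onto $\pddlAtoms$ still equals $s_n$ because $\endaction$ modifies no original atom. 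This guarantees that every formula over $\pddlAtoms$ appearing in $\pddlGoal$, in an original action precondition, or inside a constraint body evaluates identically on corresponding states.

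Next, I would prove by induction on $i$ a set of invariants showing that each monitoring atom in $s'_{i+1}$ summarizes the history of the prefix $\langle s_0, \dots, s_i\rangle$. Roughly: for a $\ST(\phi)$ constraint $c$, $hold_c \in s'_{i+1}$ iff $s_j \models \phi$ for some $j \le i$; for $\SB(\phi,\psi)$, $seen_\psi \in s'_{i+1}$ iff $s_j \models \psi$ for some $j \le i$; for $\AO(\phi)$, $seen_\phi \in s'_{i+1}$ iff $s_j \models \phi$ for some $j \le i$ and $prevent_\phi \in s'_{i+1}$ iff $\phi$ has already transitioned from true to false within $s_0,\dots,s_i$; for $\SA(\phi,\psi)$, $hold_c \in s'_{i+1}$ iff every occurrence of $\phi$ in $s_0,\dots,s_i$ has been followed by an occurrence of $\psi$ no later than position $i$. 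Combining these invariants with the exact form of the additions made by \compileC, the augmented precondition $\bigwedge_{p \in P} p$ holds in $s'_i$ iff no $\A$, $\AO$ or $\SB$ constraint has been violated by $\langle s_0, \dots, s_i\rangle$, and the augmented goal atoms hold in $s'_{n+1}$ iff the remaining $\ST$ and $\SA$ constraints are satisfied by $\sigma$.

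These two ingredients yield the biconditional. For the forward direction, the state correspondence preserves all original preconditions and the original goal, while the constraint-status invariants ensure that the additional preconditions hold at every step and that the monitoring goal atoms hold after $\endaction$. For the converse, executability of $\pi'$ together with the invariants implies that every constraint is satisfied by $\sigma$ and that $\pi$'s original preconditions and goal are met. The uniqueness clause follows from $end$ being in the goal and produced only by $\endaction$, together with every action (including $\endaction$) having precondition $\neg end$, so $\endaction$ fires at most once and necessarily last.

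The main obstacle is the inductive step for the $\AO$ and $\SA$ invariants, whose monitoring atoms are updated through conditional effects whose conditions are evaluated on the pre-action state. This produces a one-step lag between a change in $\phi$ or $\psi$ in $\sigma$ and its reflection on the monitoring atoms in $\sigma'$, and is precisely why $\endaction$ is indispensable: it acts as a trailing transition that is inert over $\pddlAtoms$ but propagates the impact of the final trajectory state $s_n$ into the monitoring atoms of $s'_{n+1}$, enabling the goal check to correctly evaluate the constraints over all $n+1$ states of $\sigma$.
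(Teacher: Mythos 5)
Your proposal is correct and follows essentially the same route as the paper's proof: both rest on the observation that compiled states agree with original states on all non-monitoring atoms, plus the characterization of each monitoring atom ($hold_c$, $seen_\phi$, $seen_\psi$, $prevent_\phi$) as a one-step-delayed summary of the prefix of the original trajectory, with $\endaction$ propagating the final state's status into the goal check. The only presentational difference is that you package these characterizations as explicitly stated inductive invariants and argue directly, whereas the paper invokes the same facts inside a case-by-case proof by contradiction over the possible failure points (a violated precondition of some $\actioninst_i$, of $\endaction$, or of $G'$, and conversely each constraint type).
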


\begin{restatable}[Complexity of \lcc]{proposition}{proplcccomplexity}\label{prop:lcc_complexity}
Suppose that $n_a$ and $n_c$ denote, respectively, the number of actions and the number of constraint in some problem $\pddlProblem$.
The worst-case time complexity of compiling $\pddlProblem$ with \lcc is $\mathcal{O}(n_a\plus n_c)$.
\qedprop
\end{restatable}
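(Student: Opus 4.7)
The plan is to walk through Algorithm~\ref{alg:lifted_no_regr} line by line and exploit the defining feature of \lcc already emphasised in its description: the precondition set $P$ and the effect set $E$ are computed once by $\compileC$ and are \emph{independent} of any particular action, so they are shared among all actions. This structural property is what turns an otherwise multiplicative cost into an additive one.

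First I would bound the cost of $\compileC$ (line~\ref{line:lcc_compile}). It performs a single loop over $\constrThree$ (line~\ref{line:lcc_forconstr}), and each branch of the if/elseif cascade adds a constant number of formulas --- at most three --- to $P$ and $E$. Hence $\compileC$ runs in $\mathcal{O}(n_c)$ time and returns sets of size $\mathcal{O}(n_c)$. The construction of the augmented fluent set $F'$ (line~\ref{line:lcc_fprime}), of the augmented initial state, and of the augmented goal (line~\ref{line:lcc_return}) likewise traverse $\constrThree$ once doing constant work per constraint, contributing a further $\mathcal{O}(n_c)$. The introduction of the action $\endaction$ and of the atom $end$ contributes only $\mathcal{O}(1)$.

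Next I would analyse the main loop over actions (lines~\ref{line:lcc_foraction}--\ref{line:lcc_addeff}). The crucial point is that this loop does not recompute anything per action: it simply attaches the already-constructed $P$ and $E$, together with $\neg end$, to each $a\in A$, with no regression-based rewriting. Under the shared representation in which $P$ and $E$ are stored once and referenced by every action --- exactly the representation the algorithm's commentary endorses --- each iteration does $\mathcal{O}(1)$ work, so the loop runs in $\mathcal{O}(n_a)$ time. The treatment of $\endaction$ in lines~\ref{line:lcc_addendprec}--\ref{line:lcc_addendaction} is a single such attachment and costs $\mathcal{O}(1)$. Summing all contributions yields $\mathcal{O}(n_c)+\mathcal{O}(n_a)+\mathcal{O}(1)=\mathcal{O}(n_a+n_c)$, as claimed.

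The main subtlety, and the only real obstacle, is justifying that lines~\ref{line:lcc_addprec} and \ref{line:lcc_addeff} may be charged $\mathcal{O}(1)$ per action rather than $\mathcal{O}(|P|+|E|)=\mathcal{O}(n_c)$. I would make this explicit by observing that, in contrast to \lccr where $\Lregression(\phi,a)$ depends on $a$ and must be materialised per action, no formula in $P$ or $E$ mentions the parameters of any specific action of $A$; hence the shared representation is semantically sound and the same object can be pointed to by all actions. This representational observation is exactly what unlocks the $\mathcal{O}(n_a+n_c)$ bound, and it also explains why the \lcc bound avoids the $n_f$- and $n_k$-dependent factors that appear in Proposition~\ref{prop:lccr_complexity} for \lccr.
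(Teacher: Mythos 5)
Your proposal is correct and follows essentially the same decomposition as the paper's proof: an $\mathcal{O}(n_c)$ pass to build $F'$, $P$, $E$, the new initial state and goal, plus an $\mathcal{O}(n_a)$ loop that attaches the shared $P$ and $E$ to each action and to $\endaction$. Your only addition is to make explicit the shared-representation argument for charging $\mathcal{O}(1)$ per action in lines~\ref{line:lcc_addprec}--\ref{line:lcc_addeff}, a point the paper's proof leaves implicit when it counts that loop as $n_a\plus 1$ operations.
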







\section{Comparing \lccr and \lcc}
\label{sec:liftedtcore_vs_lcc}

We highlight the main differences between our compilers.

\textbf{Introduced Preconditions and Effects.}~
Through the use of regression, \lccr identifies action-specific preconditions and effects that capture the constraints, while \lcc adds the required preconditions and effects in all actions in a uniform way.
%
%
This action-specific compilation of \lccr allows us to avoid redundant updates to actions that are irrelevant to a constraint, leading to succinct compiled problems.
In contrast, the action-agnostic compilation of \lcc does not permit such optimisations.

\textbf{Constraint Evaluation Delay.}~\lcc monitors constraints with a 1-step delay, i.e., the status of constraint in a state $s$ can only be identified after an action $a$ is applied in $s$.
On the other hand, \lccr\ does not delay the evaluation of constraints, in the sense that the status of a constraint in state $s$ was calculated at the time of reaching $s$ through the evaluation of regression formulae.
%
%


\textbf{Monitoring Atoms.}~\lcc uses the same monitoring atoms as \lccr, plus one $prevent_\phi$ atom for each $\AO(\phi)$. Unlike \lccr, \lcc cannot foresee when an action will cause $\phi$ to hold for a second time, and thus uses $prevent_\phi$ to block further actions once this occurs.
%

\textbf{Initial State.}~In \lccr, to express the status of the constraints in the initial state,  we need to add to that state the monitoring atoms that hold initially.
This is not required in \lcc because of its delayed constraint evaluation, i.e., the status of a constraint in the initial state is expressed by the truth values of the monitoring atoms in its subsequent state.



\textbf{Actions.}~\lcc\ requires 1 additional action compared to \lccr, i.e., action \endaction, which is required to check whether the constraints are satisfied in a goal state, due to the 1-step constraint evaluation delay in \lcc.

\textbf{Complexity.}~While both \lccr and \lcc operate in polynomial time to the size of the problem (see Propositions \ref{prop:lccr_complexity} and \ref{prop:lcc_complexity}), \lcc runs in time that is \emph{linear} in the number of constraints and the number of actions, as because it determines all preconditions and effects required for the compilation with one pass over the constraints.
Instead, \lccr derives a set of preconditions and effects for each action of the domain, which more computationally involved.
%
%

\section{Experimental Evaluation}\label{sec:experiments}


\setlength{\tabcolsep}{1mm}
\begin{table}[t]
    \centering
    \small
    \begin{tabular}{@{$\;\;\;$}l@{$\;\;\;$}llccc}
    \toprule
    \multicolumn{2}{c}{Domain} & \lcc & LiftedTCORE & TCORE & LTL-C\\
    \midrule
    \multirow{7}{*}{\rotatebox{90}{Ground}} & Folding & \textbf{20} & \textbf{20} & \textbf{20} & \textbf{20} \\
    &Labyrinth & 17 & 17 & \textbf{18} & 16 \\
    &Quantum & \textbf{19} & \textbf{19} & 18 & 17 \\
    &Recharging & \textbf{19} & \textbf{19} & \textbf{19} & N.A. \\
    &Ricochet & \textbf{20} & \textbf{20} & \textbf{20} & \textbf{20} \\
    &Rubik's & \textbf{20} & \textbf{20} & \textbf{20} & N.A. \\
    &Slitherlink & \textbf{18} & \textbf{18} & 17 & \textbf{18} \\
    \addlinespace
    \multirow{7}{*}{\rotatebox{90}{Non-Ground}}&Folding & \textbf{18} & \textbf{18} & \textbf{18} & \textbf{18} \\
    &Labyrinth & 16 & 15 & \textbf{18} & 15 \\
    &Quantum & \textbf{19} & 17 & 18 & 16 \\
    &Recharging & 18 & 17 & \textbf{19} & N.A. \\
    &Ricochet & 12 & 11 & \textbf{14} & 7 \\
    &Rubik's & 16 & \textbf{17} & 14 & N.A. \\
    &Slitherlink & \textbf{14} & 10 & 12 & 11 \\
    \midrule
    &Total & \textbf{246} & 238 & 245 & 158 \\
    \bottomrule
    \end{tabular}
    \caption{Coverage achieved by all systems in all domains.}
    \label{tab:coverage}
\end{table}


Our aim is to evaluate \lccr and \lcc against state-of-the-art approaches. 
%
To do this, we generated a new benchmark based on the $7$ domains of the latest International Planning Competition (IPC)~\cite{DBLP:journals/aim/TaitlerAEBFGPSSSSS24}, including problems with many objects and high-arity actions that are challenging for grounding-based compilations.
%
%
Specifically, we created two datasets, each with $20$ tasks per domain: one with ground constraints, denoted by ``Ground", and one with constraints containing quantifiers ($\forall$, $\exists$), denoted by ``Non-Ground".  
These datasets contained $280$ problems in total and were generated as follows.
%
First, we used the IPC generators to construct problem instances without constraints.
Second, we introduced ground constraints that complicate an optimal solution of the problem using a custom constraint generator, yielding the Ground dataset.
Third, we modified the produced ground constraints by introducing quantifiers, leading to our Non-Ground dataset.
Additional details on constraint generation, along with statistics on the generated constraints, are provided in Appendix \ref{sec:appendix:experiments:constraints}.
%



We used TCORE as our baseline, as it has been proven more effective~\cite{DBLP:conf/aips/BonassiGPS21} than other methods~\cite{DBLP:conf/aips/Edelkamp06,DBLP:conf/aips/BentonCC12,DBLP:conf/ijcai/HsuWHC07,DBLP:conf/ijcai/TorresB15} for handling PDDL constraints. 
We also included LTL-C, a lifted compiler that supports quantified LTL constraints~\cite{DBLP:conf/aaai/BaierM06}. Unlike our encodings, LTL-C compiles the constraints by first translating them into finite-state automatons.
%
All compiled instances were solved using LAMA \cite{DBLP:journals/jair/RichterW10}, a state-of-the-art satisficing planner. 
We stopped LAMA after finding the first solution and disabled its invariant generation. All experiments were run on an Intel Xeon Gold 6140M 2.3 GHz, with runtime and memory limits of 1800s and 8GB, respectively.
%
%
\lccr\footnote{\url{https://github.com/Periklismant/LiftedTCORE}} and \lcc\footnote{\url{https://github.com/LBonassi95/NumericTCORE}} were implemented using the Unified Planning library~\cite{DBLP:journals/softx/MicheliBRSVFRTBGIIKPSSS25}; their code and our benchmark\footnote{\url{https://github.com/Periklismant/aaai26-pddl3-benchmark}} are publicly available.

\subsection{Experimental Results}\label{sec:results}

Table \ref{tab:coverage} presents the coverage (number of solved problems) per domain achieved by LAMA when planning over the compiled specification of each system,
We observe that TCORE, \lccr, and \lcc performed comparably. Overall, \lcc achieved the best performance, with TCORE being a close runner-up---these IPC domains were designed to be compatible with traditional grounding-based planners like LAMA, and thus this grounding-based planner performed well on tasks compiled by TCORE. Nonetheless, both \lcc and \lccr were competitive with TCORE and outperformed LTL-C, the state of the art among lifted compilation approaches. LTL-C failed in the `Recharging' and `Rubik's' domains, as it does not support universally quantified effects, which are present in these domains.

\begin{figure}[t]
\centering
  \begin{subfigure}[b]{0.48\columnwidth}
    \includegraphics[width=\textwidth]{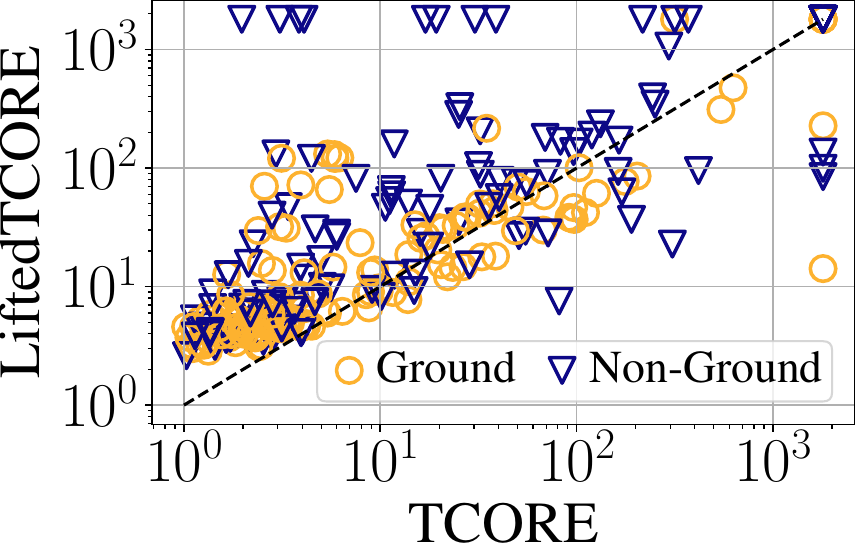}
    \label{fig:2}
  \end{subfigure}
  \begin{subfigure}[b]{0.48\columnwidth}
    \includegraphics[width=\textwidth]{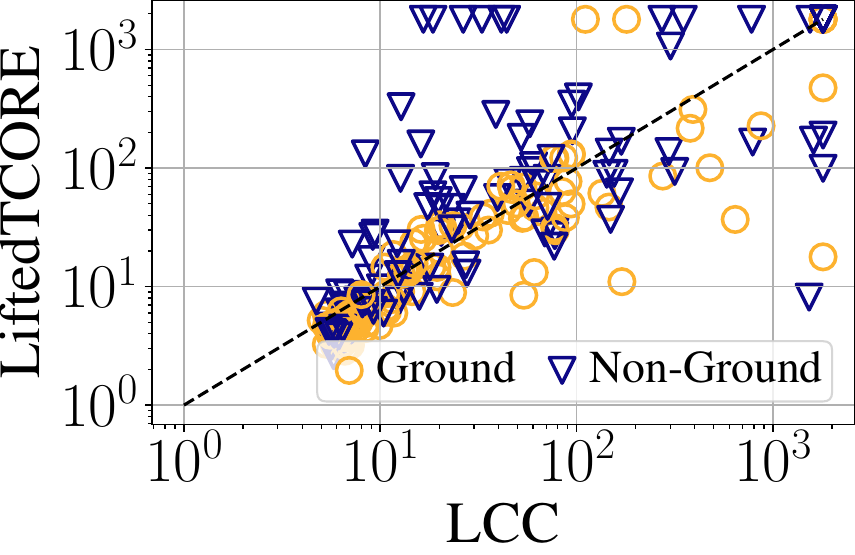}
    \label{fig:4}
  \end{subfigure}
\caption{Runtime comparison between \lccr vs. TCORE and between \lccr vs. \lcc.}
\label{fig:scatter}
\end{figure}

\par In terms of coverage, \lcc outperformed \lccr; \lccr's lifted regression operator introduced complex precondition and effect formulas, especially in cases with Non-Ground constraints, hindering LAMA’s performance.
On the other hand, this operator allows constraint evaluation without the 1-step delay of \lcc, allowing LAMA to perform a more efficient search. Table \ref{tab:expanded} reports the average number of nodes expanded by LAMA when planning over the compiled specifications of \lcc, \lccr and TCORE. On average, \lccr led to fewer node expansions than \lcc. TCORE and \lccr\ led to a similar number of node expansions, as they both employ regression to foresee redundant expansions.
\par The reduction in expansion nodes also affects runtime; Figure \ref{fig:scatter} shows that \lccr led to more efficient planning than \lcc in several instances. 
Specifically, constraint compilation with \lccr led to more efficient planning than \lcc compilation in 161 out of the 234 instances that were solved by both approaches.
\lccr performed better on instances with Ground constraints, where both approaches introduces formulas with comparable complexity.
In the Non-Ground case, the size of the largest formula introduced by \lccr\ and \lcc per instance was, on average, 26 and 18 atoms, thus leading to slower planning in the case of \lccr.
Our results show that \lccr compilation led to slower planning than TCORE compilation, although there is complementarity. 

\par We conclude with statistics on the size of the compiled problems and their compilation times, reported in Table \ref{tab:compilation_statistics}. On average, TCORE produced problems with thousands of actions and effects, reaching up to 411K actions and 1.8M effects, while both \lccr and \lcc produced problems that are more succinct by orders of magnitude. This not only makes the compiled problems more understandable and easier to debug, but also facilitates the use of planners that do not need to perform grounding. Currently, the support of complex preconditions and conditional effects is limited in these planners \cite{DBLP:conf/aaai/HorcikFT22,DBLP:conf/ecai/HorcikF23,DBLP:conf/ijcai/CorreaG24}. We expect that the benefits of our lifted compilers will become more prominent as more expressive heuristics for lifted planning are developed. 

\textbf{Limitations:} We observed that lifted constraint compilers may not be beneficial for problems with a large number of constraints. 
Lifted methods update all (lifted) actions that may affect the corresponding constraints in a least one of their instantiations, resulting in numerous new action preconditions and effects that hinder the performance of modern planners like LAMA.
In contrast, TCORE updates ground actions sparsely, as only few of them may affect each constraint.
We evaluated our compilers on a benchmark including problems with hundreds of constraints~\cite{DBLP:conf/aips/BonassiGPS21}, and TCORE proved to be the best performing compiler (see Appendix \ref{sec:appendix:experiments:icaps21}).
We believe that further research on lifted planning heuristics could alleviate this issue, while preserving the benefits of lifted compilations, such as compilation efficiency and compiled specification succinctness.

\begin{table}[t]
    \centering
    \small
\begin{tabular}{@{$\;\;\;$}l@{$\;\;\;$}lccc}
\toprule
\multicolumn{2}{c}{Domain} & \lcc & \lccr & TCORE \\
\midrule
\multirow{7}{*}{\rotatebox{90}{Ground}} & Folding & 150.45 & \textbf{132.10} & 139.55 \\
&Labyrinth & \textbf{769.69} & 1453.44 & 1464.94 \\
&Quantum & 338.24 & 379.82 & \textbf{307.59} \\
&Recharging & 9259.16 & \textbf{1629.53} & 1796.47 \\
&Ricochet & 2412.70 & 2153.00 & \textbf{1956.75} \\
&Rubik's & 230268.65 & \textbf{180999.25} & 189079.45 \\
&Slitherlink & 19483.47 & 13454.12 & \textbf{11437.53} \\
\addlinespace
\multirow{7}{*}{\rotatebox{90}{Non-Ground}}&Folding & 43.89 & \textbf{37.61} & 40.33 \\
&Labyrinth & \textbf{623.57} & 3028.00 & 3230.93 \\
&Quantum & 408214.12 & 224.18 & \textbf{216.18} \\
&Recharging & 1305.18 & \textbf{459.24} & 673.18 \\
&Ricochet & 327.73 & \textbf{121.27} & 156.73 \\
&Rubik's & 782161.29 & \textbf{316887.57} & 382046.86 \\
&Slitherlink & \textbf{478.40} & 964.00 & 780.40 \\
\bottomrule
\end{tabular}
\caption{Average number of nodes expanded by LAMA on tasks compiled by TCORE, \lccr, and \lcc. Averages are computed among instances solved by all systems.}
\label{tab:expanded}
\end{table}

\begin{table}[t]
    \centering
       \small 
    \begin{tabular}{@{}l@{$\;\;$}c@{$\;\;$}c@{$\;\;$}c@{$\;\;$}c@{$\;\;$}c@{$\;\;$}c@{}} 
    \centering
    \multirow{2}{*}{} & \multicolumn{3}{c}{Ground} & \multicolumn{3}{c}{Non-Ground} \\
    & \makecell{Lifted\\TCORE} & \lcc & TCORE & \makecell{Lifted\\TCORE} & \lcc & TCORE   \\
    \cmidrule(r){2-4}
    \cmidrule(r){5-7}
    Actions & 7 & 8 & 21K & 7 & 8 & 21K\\
    Effects & 58 & 63 & 95K & 60 & 67 & 96K \\
    Comp. Time & 0.023 & 0.003 & 4.518 & 0.081 & 0.003 & 5.956
    \end{tabular}
    \caption{Average number of actions, effects and compilation time (in seconds) on the problems in our dataset.}
    \label{tab:compilation_statistics}
\end{table}

\section{Summary and Further Work}\label{sec:summary}

We proposed two methods, \lccr and \lcc, for compiling away quantitative state-trajectory constraints from a planning problem without grounding it.
We studied both compilers theoretically, proving their correctness and deducing their worst-case time complexity, and qualitatively compared their key features.
Moreover, we presented an empirical evaluation of our methods on the domains included in the latest International Planning Competition, demonstrating that our compilers are efficient and lead to significantly more succinct compiled specifications compared to a state-of-the-art compiler that grounds the domain, while yielding competitive performance when used for planning with LAMA.

In the future, we aim to design lifted compilers for numeric and metric time constraints~\cite{DBLP:conf/aaai/BonassiGS24}.

\section*{Acknowledgements}
Periklis Mantenoglou and Pedro Zuidberg Dos Martires were supported by the Wallenberg AI, Autonomous Systems and Software Program (WASP) funded by the Knut and Alice Wallenberg Foundation. Luigi Bonassi was supported by the joint UKRI and AISI-DSIT Systemic Safety Grant [grant number UKRI854].

\bibliography{aaai2026}

\clearpage


\appendix

\section{Proofs for Section~\ref{sec:lccr}}\label{proofs_lccr}

\subsection{Proof of Lemma~\ref{prop:lifted_regr_correctness}}
\label{proof:prop:lifted_regr_correctness}
\propliftedregrcorrectness*

\begin{proof}
We use an inductive proof on the structure of formula $\phi$.
Note that, although $\phi$ is a closed formula, a subformula $\psi$ of $\phi$ may contain variables that are quantified in $\phi$ outside the scope of $\psi$.
We prove that, for all possible substitutions $\substpsi$ of the variables in $\psi$ with domain objects, it holds that:
\begin{align}\label{eq:fcase}
    s\models \Lregression(\psi,a)\applys{(\substainst \cup \substpsi)} \iff s[\actioninst]\models \psi\applys{\substpsi}
\end{align}
Henceforth, we use $\substpsia$ as a shorthand for $\substainst\cup\substpsi$.

We start with the base case where $\psi$ is an atom $f$.
Based on Definition \ref{def:lifted_regression}, we have:
\begin{align*}\label{eq:atom_lregression}
s\models\Lregression(f,a)\applys{\substfa}\iff &s\models \LGamma_{f}(a)\applys{\substfa}\vee \\
&(s\models f\applys{\substf}\wedge s\not\models\LGamma_{\neg f}(a)\applys{\substfa})
\end{align*}
Since $s[\actioninst]\val \bigcup_{\substack{\cond{\groundcond}{e^{+}}\in \actionEff{\actioninst}\\ s\models \groundcond}} e^{+} {\cup} (s{\setminus} \bigcup_{\substack{\cond{\groundcond}{e^-}\in \actionEff{\actioninst}\\ s\models \groundcond}} e^{-})$, in order for $f\applys{\substf}$ to be included in $s[\actioninst]$, there needs to be an effect of ground action $\actioninst$ that brings about $f\applys{\substf}$, or for $f\applys{\substf}$ to persist from the previous state $s$ while no effect of $\actioninst$ brings about $\neg f\applys{\substf}$.

Case 1: $f\applys{\substf}\in \bigcup_{\substack{\cond{\groundcond}{e^{+}}\in \actionEff{\actioninst}: s\models \groundcond}} e^{+}$. We have:
\begin{align*}
&s\models \LGamma_{f}(a)\applys{\substfa} \xLeftrightarrow{Def.~\ref{def:lifted_gamma}}\\
&\exists (\condz{z}{c}{e}){\in}\actionEff{a}, \exists \xi(f,e):\\
&\quad s\models (\exists\zfree{:} c\applys{\substz(f,e)}\wedge \bigwedge_{\mathclap{(t_i\doteq u_i)\in \xi(l,e)\setminus\xi_z(l, e)}} t_i=u_i)\applys{\substfa} \xLeftrightarrow{(\alpha)} \\
&\exists \cond{\groundcond}{f\applys{\substf}}\in \actionEff{\actioninst}: s\models \groundcond \iff f\applys{\substf}\in \bigcup_{\substack{\mathclap{\cond{\groundcond}{e^{+}}\in \actionEff{\actioninst}: s\models \groundcond}}} e^{+}
\end{align*}
It remains to prove equivalence ($\alpha$), which relates the conditions under which actions $a$ and $\actioninst$ may bring about $f\applys{\substf}$.
We prove both directions by contradiction.
We use the fact that, for each effect $\condz{z}{c}{e}$ of $a$, $\actioninst$ contains one effect for each possible substitution $\substzi{i}$ of the variables in $\vec{z}$ with domain objects.
If there are $m$ such substitutions, then $\actioninst$ contains effects $\cond{c\applys{\substza{i}}}{e\applys{\substza{i}}}$, $1\leq i\leq m$, where $\substza{i}\val \substzi{i}\cup \substainst$.

Case $\Rightarrow$:
Suppose that the left-hand side of equivalence ($\alpha$) holds, but there is no satisfied effect of $\actioninst$ that brings about $f\applys{\substf}$.
Therefore, for each effect $\cond{c\applys{\substza{i}}}{e\applys{\substza{i}}}$ of $\actioninst$, we have $s\not\models c\applys{\substza{i}}$ or that $e\applys{\substza{i}}$ and $f\applys{\substf}$ are not unifiable. 
Suppose that, for all such effects, $s\not\models c\applys{\substza{i}}$.
Since there is an effect $\cond{c\applys{\substza{i}}}{e\applys{\substza{i}}}$ in $\actioninst$ for all possible substitutions of the variables in $\vec{z}$, this implies that there is no substitution $\theta$ such that $s\models (c\applys{\substainst})\applys{\theta}$.
However, according to the left-hand side of equivalence $(\alpha)$, under substitutions $\theta_z(f,e)$ and $\substfa$, there is an assignment $\substzfree$ to the free variables in $(c\applys{\theta_z(f,e)})\applys{\substfa}$, i.e., variables $\zfree$, such that $s\models (c\applys{\theta_z(f,e)\cup \substzfree})\applys{\substfa}$.
Therefore, $s\models (c\applys{\substainst})\applys{\theta}$ is true when $\theta$ we get by applying  $\theta_z(f,e)\cup \substzfree$ first and then $\substf$, which is a contradiction.

Consider effect $\cond{(c\applys{\theta_z(f,e)\cup \substzfree})\applys{\substfa}}{(e\applys{\theta_z(f,e)\cup \substzfree})\applys{\substfa}}$ of $\actioninst$ and suppose that $(e\applys{\theta_z(f,e)\cup \substzfree})\applys{\substfa}$ and $f\applys{\substf}$ are not unifiable.
However, according to the left-hand side of equivalence $(\alpha)$, there is a most general unifier $\xi(f,e)$ between $f$ and $e$, which, due the satisfaction of equalities $(t_i\val u_i)\applys{\substfa}$, successfully unifying the arguments of $f\applys{\substf}$ with the appropriate arguments of $\actioninst$, enforces substitution $\theta_z(f,e)\cup \substainst$ onto $e$, while no variable in $\zfree$ appears in $e$.
As a result, $(e\applys{\theta_z(f,e)\cup \substzfree})\applys{\substfa}$ and $f\applys{\substf}$ can, in fact, be unified, which is a contradiction.

Both cases led to a contradiction, thus proving that, if the left-hand side of equivalence ($\alpha$) holds, then there is a satisfied effect of $\actioninst$ that brings about $f\applys{\substf}$.

Case $\Leftarrow$:
Suppose that there is a satisfied effect $\cond{\groundcond}{f\applys{\substf}}$ of action $\actioninst$ in state $s$, but the left-hand side of equivalence $(\alpha)$ does not hold.
In this case, for every effect $\condz{z}{c}{e}$ in action $a$, either $e$ cannot be unified with $f\applys{\substf}$, or there is no substitution for which $c$ is satisfied in $s$.
However, since action $\actioninst$ is a ground incarnation of action $a$, there is an effect $\condz{z}{c}{e}$ of $a$ whose grounding via the arguments of $\actioninst$ led to effect $\cond{\groundcond}{f\applys{\substf}}$ in $\actioninst$.
This grounding constitutes a substitution $\theta$, which is the union of substitution $\substainst$ with the substitution $\theta_z$ for the variables in $\vec{z}$ that unifies $c$ with $\groundcond$ and $e$ with $f\applys{\substf}$, which is a contradiction. 
Thus, if there is a satisfied effect $\cond{\groundcond}{f\applys{\substf}}$ of action $\actioninst$ in state $s$, then the left-hand side of equivalence $(\alpha)$ holds.
%
%
%
%

Case 2: $f\applys{\substf}\in s \wedge \neg f\applys{\substf}\not\in\bigcup_{\substack{\cond{c}{e^{-}}\in \actionEff{\actioninst}: s\models c}} e^{-}$.
It suffices to prove that $s\models \LGamma_{\neg f}(a)\applys{\substfa} \iff \neg f\applys{\substf}\in \bigcup_{\substack{\cond{\groundcond}{e^{-}}\in \actionEff{\actioninst}: s\models \groundcond}} e^{-}$ which is possible by following the steps in case 1.

By combining the results of the two cases, we prove equivalence \eqref{eq:fcase} when $\psi$ is an atom $f$, i.e., the base case of our inductive proof.

For the inductive step, we assume that equivalence \eqref{eq:fcase} holds for first-order formulae $\phi$, $\phi_1$ and $\phi_2$, and prove that equivalence \eqref{eq:fcase} holds for formulae $\neg \phi$, $\phi_1\wedge \phi_2$, $\phi_1\vee \phi_2$, $\forall\vec{x}\phi$ and $\exists\vec{x}\phi$.
We leverage the following properties of the regression operator, which hold because operator $\Lregression$ only induces changes in $\phi$ at the atom level, while preserving the structure of $\phi$ (see Definition \ref{def:lifted_regression}). 
\begin{align*}
\Lregression(\neg\phi, a) &\iff \neg \Lregression(\phi, a) \\
\Lregression(\phi_1\wedge \phi_2, a) &\iff \Lregression(\phi_1, a) \wedge \Lregression(\phi_2, a) \\
\Lregression(\phi_1\vee \phi_2, a) &\iff \Lregression(\phi_1, a) \vee \Lregression(\phi_2, a) \\
\Lregression(\forall\vec{x}\phi, a) &\iff \forall \vec{x} \Lregression(\phi, a) \\
\Lregression(\exists\vec{x}\phi, a) &\iff \exists \vec{x} \Lregression(\phi, a) 
\end{align*}
For the case of $\neg \phi$, we prove equivalence \eqref{eq:fcase} as follows:
\begin{align*}
&s\models \Lregression(\neg \phi, a)\applys{\substainst} \iff s\models \neg \Lregression(\psi, a)\applys{\substainst}\iff \\
&s\not\models \Lregression(\psi, a)\applys{\substainst}\iff s[\actioninst]\not\models \psi \iff s[\actioninst]\models \neg \psi
\end{align*}
The proofs for the remaining cases are similar.
\end{proof}

\subsection{Proof of Proposition~\ref{prop:lccr_correctness}}
\label{proof:prop:lccr_correctness}

\proplccrcorrectness*

\begin{proof}
Let:
\begin{compactitem} 
\item $\pddlProblem\val \tup{F, O, A, I, G, C}$
\item $\pddlProblem'\val \tup{F', O, A', I', G', \emptyset}$
\item $\plan\val \tup{\actioninst_0, \dots, \actioninst_{n\minus 1}}$ 
\end{compactitem}
Additionally, $\sigma\val \tup{s_0, \dots, s_n}$ and $\sigma'\val \tup{s'_0, \dots, s'_n}$ denote the state trajectories induced by executing plans $\plan$ on problems $\pddlProblem$ and $\pddlProblem'$, respectively, while $Pre'(a)$ and $Eff'(a)$ denote the preconditions and the effects of an action $a$ in the compiled problem $\pddlProblem'$.

We use the following deductions in our proof, which follow directly from Algorithm \ref{alg:lifted_tcore}. 
\begin{compactenum}
\item Action $\actioninst_i$ has strictly more restrictive preconditions in $A'$ than in $A$, i.e., $Pre'(\actioninst_i)\models \actionPrec{\actioninst_i}$.
\item The effects of $\actioninst_i$ in $A'$ that are different from the ones of $\actioninst_i$ in $A$ affect only monitoring atoms, i.e., $hold_c$ and $seen_\phi$, and thus, for each formula $\phi$ without monitoring atoms, we have $s_i\models \phi$ iff $s'_i\models \phi$.
\end{compactenum}

Suppose that $\plan$ is solution for $\pddlProblem$, but it is not a solution for $\pddlProblem'$.
Since $\plan$ is not a solution for problem $\pddlProblem'$, and $\pddlProblem'$ does not have constraints, then one of the following must hold: 
\begin{compactenum}
\item $\exists \actioninst_i, 0\leq i < n: s'_i\not\models Pre'(\actioninst_i)$
\item $s'_n\not\models G'$
\end{compactenum}
In case 1, where state $s'_i$ does not satisfy the precondition of action $\actioninst_i$, since $\plan$ is a solution for $\pddlProblem$, we have that $s_i\models \actionPrec{\actioninst_i}$.
Since $s_i$ and $s'_i$ differ only on monitoring atoms, we have $s'_i\models \actionPrec{\actioninst_i}$.
Moreover, because $Pre'(\actioninst_i)\val \actionPrec{\actioninst_i}\wedge p_1 \wedge \dots \wedge p_k$ and $s'_i\not\models Pre'(\actioninst_i)$, there is some $p_c\in \{p_1, \dots, p_k\}$ such that $s'_i\not\models p_c$.
Based on Algorithm \ref{alg:lifted_tcore}, precondition $p_c$ is introduced by the compilation of either an $\A(\phi)$, an $\AO(\phi)$ or a $\SB(\phi, \psi)$ constraints.
\begin{compactitem}
    \item Case $\A(\phi)$. $p_c$ is $\Lregression(\phi, a)\applyai{i}$, where $\subst_i$ substitutes the parameters of $a$ with the ground arguments of $\actioninst_i$. 
    We have $s'_i\not\models\Lregression(\phi, a)\applyai{i}$, which, since $\Lregression(\phi, a)\applyai{i}$ does not contain monitoring atoms, implies that $s_i\not\models\Lregression(\phi, a)\applyai{i}$.
    According to Lemma \ref{prop:lifted_regr_correctness}, $s_i\not\models\Lregression(\phi, a)\applyai{i}$ implies that $s_i[\actioninst_i]\not\models \phi$.
    Thus, there is a state induced by executing plan $\plan$ in which $\phi$ does not hold.
    This is contraction because then plan $\plan$ violates constraint $\A(\phi)$ in $\pddlProblem$.
    \item Case $\AO(\phi)$. $p_c$ is $\neg (seen_\phi \wedge \neg\phi \wedge \Lregression(\phi, a)\applyai{i})$, and thus $s'_i\models seen_\phi \wedge \neg\phi\wedge \Lregression(\phi, a)\applyai{i}$. 
    According to effect $\cond{\Lregression(\phi, a)\applyai{i}}{seen_\phi}$ of $Eff'(\actioninst_i)$ and Lemma \ref{prop:lifted_regr_correctness}, atom $seen_\phi$ is true in $s'_i$ iff there is a state before or at $s'_i$ where $\phi$ holds. 
    Moreover, condition $\neg\phi \wedge \Lregression(\phi, a)\applyai{i}$ expresses that $\phi$ is false at $s'_i$ and it is true at $s'_i[\actioninst_i]$.
    Since $\phi$ does not contain monitoring atoms, the above implies that $\plan$ induces states $s_k$, $s_i$ and $s_i[\actioninst_i]$ in $\pddlProblem$, where $k<i$, such that $\phi$ is true in $s_k$, $\phi$ is false in $s_i$ and $\phi$ is true in $s_i[\actioninst_i]$.
    This violates constraint $\AO(\phi)$ of $\pddlProblem$ leading to a contradiction.
    \item Case $\SB(\phi, \psi)$. $p_c$ is $\Lregression(\phi, a)\applyai{i}\rightarrow seen_\psi$, and thus $s'_i\models \Lregression(\phi, a)\applyai{i}\wedge\neg seen_\psi$. The monitoring atom $seen_\psi$ is true iff there exists a state $s'_j$, $j\leq i$, where $\psi$ is true (see line \ref{line:lccr_case_sb_e} of Algorithm \ref{alg:lifted_tcore}).
    Since $\phi$ and $\psi$ do not contain monitoring atoms, we have $s_i\models \Lregression(\phi, a)\applyai{i}$ and that there is no state $s_j$, $j\leq i$, where $\psi$ is true.
    Thus, $\phi$ holds in state $s_i[\actioninst_i]$ and there is no earlier state where $\psi$ holds, which is a contradiction because it violates constraint $\SB(\phi, \psi)$.
\end{compactitem}

Case 2 expresses that the final state $s'_n$ induced by plan $\plan$ on problem $\pddlProblem'$ does not satisfy its goal $G'$.
Based on Algorithm \ref{alg:lifted_tcore}, $G'$ is strictly more restrictive that goal $G$, as it additionally requires that all the $hold_c$ monitoring atoms are true.
Moreover, we have $s_n\models G$, because $\plan$ is a solution for $\pddlProblem$, which implies that $s'_n\models G$, as $G$ does not contain monitoring atoms.
As a result, $s'_n$ must be violating a $hold_c$ atom.
Thus, there is a constraint $c\val \ST(\phi)$ or $c\val \SA(\phi, \psi)$ in $\pddlProblem$ such that $s'_n\not\models hold_c$.
\begin{compactitem}
    \item Case $c$ is $\ST(\phi)$. Since $hold_c$ may only come about via effect $\cond{\Lregression(\phi, a)}{hold_c}$ and $s'_n\not\models hold_c$, then there is no action $\actioninst_i$ in $\plan$ that brings about $\phi$ when executed on a state $s'_i$ of $\sigma'$. 
    Since $\Lregression(\phi, a)$ does not include monitoring atoms, this implies that there neither a state $s_i$ in $\sigma$ such that action $\actioninst_i$ brings about $\phi$ in $s_i[\actioninst_i]$.
    This is a contradiction because $\plan$ solves $\pddlProblem$ and $\ST(\phi)$ is one of its constraints.
    \item Case $c$ is $\SA(\phi, \psi)$. Based on the effects manipulating $hold_c$, we have $s'_n\not\models hold_c$ iff for each state $s'_j$ such that $\actioninst_j$ leads to a state $s'_j[\actioninst_j]$ where $\phi$ is true and $\psi$ is false, there is a subsequent state $s'_i$, $i>j$ where applying $\actioninst_i$ leads to a state where $\psi$ holds.
    Moving this situation to the trace $\sigma$ induced by $\plan$ in $\pddlProblem$, we deduce that $\plan$ violates constraint $\SA(\phi, \psi)$ of $\pddlProblem$, which is a contradiction because $\plan$ is a solution of $\pddlProblem$.
\end{compactitem}

By contradicting cases 1 and 2, we have shown that if $\plan$ is solution for $\pddlProblem$ then plan $\plan$ is a solution for $\pddlProblem'$.

Suppose that $\plan$ is a solution for $\pddlProblem'$, but not a solution for $\pddlProblem$.
One of the following must hold:
\begin{compactenum}
\item $\exists \actioninst_i, 0\leq i < n: s_i\not\models \actionPrec{\actioninst_i}$
\item $s_n\not\models G$
\item $\exists c\in C$ such that $\sigma$ does not satisfy $c$.
\end{compactenum}
Case 1 cannot be true as $\forall \actioninst_i, 0\leq i < n: s'_i\models Pre'(\actioninst_i)$, since $\plan'$ is a solution for $\pddlProblem'$, $Pre'(\actioninst_i)\models \actionPrec{\actioninst_i}$, and $s'_i\models \actionPrec{\actioninst_i}$ iff $s_i\models\actionPrec{\actioninst_i}$.
Case 2 cannot be true because we have $s'_n\models G'$, which is implies $s_n\models G$.

We treat case 3 depending on the type of constraint $c$.
\begin{compactitem}
    \item Case $\A(\phi)$. There is a state $s_i$ such that $s_i\models \neg\phi$, which implies that $s'_i\models \neg\phi$.
    However, since, for every action $a$, precondition $Pre'(a)$ requires that $\Lregression(\phi, a)$ holds, i.e., $a$ does not bring about $\neg\phi$ in the next state, action $\actioninst_{i-1}$ cannot be applied on state $s'_{i\minus 1}$.
    Thus, plan $\plan$ is not executable on $\pddlProblem'$, which is a contradiction.
    \item Case $\ST(\phi)$. There is no state in $\sigma$ where $\phi$ holds. As a result, there is neither such a state in $\sigma'$, meaning that effect $\cond{\Lregression(\phi, a)\applyai{i}}{hold_c}$ is not triggered for any action $\actioninst_i$ in $\plan$.
    Thus, $hold_c$ does not hold at $s'_n$, which is a contradiction as $hold_c$ is required in the goal of $\pddlProblem'$.
    \item Case $\AO(\phi)$. There are states $s_l$, $s_j$ and $s_i$ such that $l<j<i$, $\phi$ is true in $s_l$, $\phi$ is false in $s_j$ and $\phi$ is true in $s_i$.
    This implies that $\phi$ has the corresponding truth values in states $s'_l$, $s'_j$ and $s'_i$.
    Based on the action effects in $\pddlProblem'$, $seen_\phi$ holds at state $s'_l$.
    Suppose that $s'_k$, $j\leq k< i$, is the last state before $s'_i$ where $\phi$ is false.
    Then, $seen_\phi\wedge\neg\phi\wedge \Lregression(\phi, a)\applyai{k}$ is satisfied in $s'_k$, meaning that action $a_k$ is not applicable in $s'_k$, which is a contradiction because plan $\plan$ is a solution for $\pddlProblem'$.
    \item Case $\SB(\phi,\psi)$. There is a state $s_i$ such that $\phi$ holds in $s_i[\actioninst_i]$ and there is no state $s_j$, $j\leq i$, where $\psi$ holds.
    This implies that $s'_i[\actioninst_i]\models \phi$ and $\forall j\leq i: s'_i\models \neg \psi$.
    As a result, action effect $\cond{\Lregression(\psi, a)}{seen_\psi}$ is never satisfied before $s'_i[\actioninst_i]$, meaning that $seen_\psi$ is false in $s'_i[\actioninst_i]$.
    Therefore, based on precondition $\phi\rightarrow seen_\psi$, action $\actioninst_i$ is not applicable in state $s'_i$, which is a contradiction.
    \item Case $\SA(\phi,\psi)$. There is a state $s_j$ where $\phi\wedge \neg\psi$ holds and there is no state after $s_i$, $i>j$, where $\psi$ holds.
    This implies that $s'_j\models \phi\wedge\neg\psi$ and $\forall j<i\leq n: s'_i\models \neg\psi$.
    Then, based on the effects in lines \ref{line:lccr_case_sa}--\ref{line:lccr_case_sa_e} of Algorithm \ref{alg:lifted_tcore}, $hold_c$ does not hold in the final state $s'_n$, which is a contradiction because $hold_c$ is required by goal $G'$.
\end{compactitem}
By contradiction, we have shown that if plan $\plan$ is a solution for $\pddlProblem'$ then it is also a solution for $\pddlProblem$.

Therefore, if \lccr compiles a problem $\pddlProblem$ into problem $\pddlProblem'$, then a plan $\plan$ is valid for $\pddlProblem$ iff $\plan$ is valid for $\pddlProblem'$.
\end{proof}

\subsection{Proof of Proposition \ref{prop:lccr_complexity}} \label{proof:prop:lccr_complexity}

\proplccrcomplexity*

\begin{proof}
The first step of \lccr is to check whether a constraint is violated in the initial state of the problem, rendering it unsolvable (see lines \ref{line:lccr_check_unsolvable}--\ref{line:lccr_unsolvable} of Algorithm \ref{alg:lifted_tcore}).
To do this, \lccr iterates over the constraints in $C$ and, in the worst-case, solves a model checking $I\models \phi$ problem for each constraint. 
The cost of this model checking problem is $\mathcal{O}(n_f^b)$~\cite{DBLP:conf/stoc/Vardi82}, and thus the cost of this step is $\mathcal{O}(n_cn_f^b)$.
Second, \lccr introduces the necessary monitoring atoms for the constraints in $C$, which is done in $n_c$ steps (line \ref{line:lccr-F'}).
Third, \lccr identifies the monitoring atoms that need to be added in the initial state (line \ref{line:lccr-I'}).
The cost of identifying these atoms is $\mathcal{O}(n_cn_f^b)$, as it requires solving model checking problems, similarly to the first step.
Fourth, \lccr iterates over each action-constraint pair in $\pddlProblem$ and computes the preconditions and effects that need to be added to the action (lines \ref{line:lccr_foraction}--\ref{line:lccr_addeff} and lines \ref{line:lccr_initpe}--\ref{line:lccr_returnpe}).
Identifying these preconditions and effects requires, in the worst case, the evaluation of two lifted regression expressions. 
To evaluate $\Lregression(\phi, a)$ for a formula $\phi$ and action $a$, we need to compute $\Lregression(f, a)$ for each atom $f$ in $\phi$, whose number is bounded by $n_f$.
$\Lregression(f, a)$ requires the computation of $\LGamma_f(a)$ and $\LGamma_{\neg f}(a)$. 
To derive $\LGamma_l(a)$, we need to compute, for each effect $e$ of action $a$---whose number is bounded by $n_f$---the most general unifier $\xi(l,e)$, which requires $n_k$ steps, i.e., one iteration over the argument of $l$ and $e$.
Therefore, the overall cost of computing the preconditions and effects that need to be added to the actions of the problem is $\mathcal{O}(n_an_cn_f^2n_k)$.
Fifth, \lccr added to the goal state all $hold_c$ monitoring atoms, which requires at most $n_c$ steps (line \ref{line:lccr_return}).

Based on the above steps, the worst-case time complexity of \lccr is $\mathcal{O}(2n_cn_f^b\plus 2n_c\plus n_a n_c n^2_f n_k)$, which, after simplifications, becomes $\mathcal{O}(n_cn_f^b \plus n_a n_c n^2_f n_k)$.
\end{proof}

\section{Proofs for Section~\ref{sec:lcc}}\label{proofs_lcc}

\subsection{Proof of Proposition \ref{prop:lcc_correctness}}
\label{proof:prop:lcc_correctness}

\proplcccorrectness*

\begin{proof}
Let:
\begin{compactitem} 
\item $\pddlProblem\val \tup{F, O, A, I, G, C}$
\item $\pddlProblem'\val \tup{F', O, A', I', G', \emptyset}$
\item $\plan\val \tup{\actioninst_0, \dots, \actioninst_{n\minus 1}}$ 
\item $\plan'\val \tup{\actioninst_0, \dots, \actioninst_{n\minus 1}, \endaction}$.
\end{compactitem}
Additionally, $\sigma\val \tup{s_0, \dots, s_n}$ and $\sigma'\val \tup{s'_0, \dots, s'_n, s'_f}$ denote the state trajectories induced by executing plans $\plan$ and $\plan'$ on problems $\pddlProblem$ and $\pddlProblem'$, respectively, while $Pre'(a)$ and $Eff'(a)$ denote the preconditions and the effects of an action $a$ in the compiled problem $\pddlProblem'$.

We use the following deductions in our proof, which follow directly from Algorithm \ref{alg:lifted_no_regr}. 
\begin{compactenum}
\item Action $\actioninst_i$ has strictly more restrictive preconditions in $A'$ than in $A$, i.e., $Pre'(\actioninst_i)\models \actionPrec{\actioninst_i}$.
\item The effects of $\actioninst_i$ in $A'$ that are different from the ones of $a_i$ in $A$ affect only monitoring atoms, i.e., $hold_c$, $seen_\phi$ and $prevent_\phi$ atoms, and thus, for each formula $\phi$ without monitoring atoms, we have $s_i\models \phi$ iff $s'_i\models \phi$.
\end{compactenum}

Suppose that $\plan$ is solution for $\pddlProblem$, but $\plan'$ is not a solution for $\pddlProblem'$.
Since $\plan'$ is not a solution for problem $\pddlProblem'$, and $\pddlProblem'$ does not have constraints, then one of the following must hold: 
\begin{compactenum}
\item $\exists \actioninst_i, 0\leq i < n: s'_i\not\models Pre'(\actioninst_i)$
\item $s'_n\not\models Pre'(\endaction)$
\item $s'_f\not\models G'$
\end{compactenum}
In case 1, where state $s'_i$ does not satisfy the precondition of action $\actioninst_i$, since $\plan$ is a solution for $\pddlProblem$, we have that $s_i\models \actionPrec{\actioninst_i}$.
Since $s_i$ and $s'_i$ differ only on monitoring atoms, we have $s'_i\models \actionPrec{\actioninst_i}$.
Moreover, because $Pre'(\actioninst_i)\val \actionPrec{\actioninst_i}\wedge p_1 \wedge \dots \wedge p_k$ and $s'_i\not\models Pre'(\actioninst_i)$, there is some $p_c\in \{p_1, \dots, p_k\}$ such that $s'_i\not\models p_c$.
Based on Algorithm \ref{alg:lifted_no_regr}, precondition $p_c$ is introduced by the compilation of either an $\A(\phi)$, an $\AO(\phi)$ or a $\SB(\phi, \psi)$ constraints.
\begin{compactitem}
    \item Case $\A(\phi)$. $p_c$ is $\phi$, and thus $s'_i\not\models\phi$, which, since $\phi$ does not contain monitoring atoms, implies that $s_i\not\models\phi$. This is a contradiction because $s_i$ is a state induced by executing solution $\plan$ of $\pddlProblem$ and $\pddlProblem$ has constraint $\A(\phi)$.
    \item Case $\AO(\phi)$. $p_c$ is $\neg (\phi\wedge prevent_\phi)$, and thus $s'_i\models\phi\wedge prevent_\phi$. According to the effects of the actions in $A'$, the monitoring atom $prevent_\phi$ is true iff there is a state $s'_j$, $j<i$, where $\phi$ is false and atom $seen_\phi$ is true (see line \ref{line:lcc_amo_eff1} of Algorithm \ref{alg:lifted_no_regr}). In turn, $seen_\phi$ is true iff there is a state $s'_l$, $l<j$, where $\phi$ is true (line \ref{line:lcc_amo_eff2}).
    Since $\phi$ does not contain monitoring atoms, the above implies that the states $s_l$, $s_j$ and $s_i$ of solution $\plan$ of $\pddlProblem$ are such that $\phi$ is true in $s_l$, $\phi$ is false in $s_j$ and $\phi$ is true in $s_i$.
    This violates constraint $\AO(\phi)$ of $\pddlProblem$ leading to a contradiction.
    \item Case $\SB(\phi, \psi)$. $p_c$ is $\phi\rightarrow seen_\psi$, and thus $s'_i\models \phi\wedge\neg seen_\psi$. The monitoring atom $seen_\psi$ is true iff there exists a state $s'_j$, $j<i$, where $\psi$ is true (line \ref{line:lcc_sb_eff}).
    Since $\phi$ and $\psi$ do not contain monitoring atoms, we have $s_i\models \phi$ and that there is no state $s_j$, $j<i$, preceding $s_i$ where $\psi$ is true.
    This is a contradiction because $\plan$ is a solution of $\pddlProblem$ and $\pddlProblem$ includes constraint $\SB(\phi, \psi)$.
\end{compactitem}

Case 2 expresses that state $s'_n$ does not satisfy the precondition of the $\endaction$ of problem $\pddlProblem'$, which is the conjunction of the preconditions $p_1, \dots, p_k$ discovered during constraint compilation.
We work as in case 1; we assume that $s'_n\not\models p_i$, $1\leq i\leq k$, and discover that this implies that a constraint of problem $\pddlProblem$ is violated at state $s_n$, contradicting the assumption that $\plan$ is a solution for $\pddlProblem$.

Case 3 expresses that the final state $s'_f$ of plan $\plan'$ does not satisfy the goal $G'$ of problem $\pddlProblem'$.
Based on Algorithm \ref{alg:lifted_no_regr}, $G'$ is strictly more restrictive that goal $G$, as it additionally requires that all the $hold_c$ monitoring atoms and the $end$ atom are true.
The $end$ atom is true in $s'_f$ because it is an effect of action $\endaction$.
Moreover, we have $s_n\models G$, because $\plan$ is a solution for $\pddlProblem$, which implies that $s'_n\models G$, as $G$ does not contain monitoring atoms.
In turn $s'_n \models G$ implies that $s'_f\models G$ because the $\endaction$ does not affect the atoms in $G$.
As a result, $s'_f$ must be violating a $hold_c$ atom.
Thus, there is a constraint $c\val \ST(\phi)$ or $c\val \SA(\phi, \psi)$ in $\pddlProblem$ such that $s'_f\not\models hold_c$.
\begin{compactitem}
    \item Case $\ST(\phi)$. Since every action in $A'$ has effect $\cond{\phi}{hold_c}$ and $s'_f\not\models hold_c$, then there is no state in $\sigma'$ where $\phi$ holds. 
    This implies that neither $\sigma$ contains such a state, which is a contradiction because $\plan$ solves $\pddlProblem$ and $\ST(\phi)$ is one of its constraints.
    \item Case $\SA(\phi, \psi)$. Based on the effects manipulating $hold_c$ and the fact that $hold_c$ is true initially, we $s'_f\not\models hold_c$ iff there is a state where $\phi\wedge \neg\psi$ holds and it is not followed by a state where $\psi$ holds.
    Moving this situation to the trace $\sigma$ induced by plan $\plan$, we deduce that this plan violates constraint $\SA(\phi, \psi)$ of $\pddlProblem$, which is a contradiction because $\plan$ is a solution of $\pddlProblem$.
\end{compactitem}

By contradicting cases 1--3, we have shown that if $\plan$ is solution for $\pddlProblem$ then plan $\plan'$ is a solution for $\pddlProblem'$.

Suppose that $\plan'$ is a solution for $\pddlProblem'$, but $\plan$ is not solution for $\pddlProblem$.
One of the following must hold:
\begin{compactenum}
\item $\exists \actioninst_i, 0\leq i < n: s_i\not\models \actionPrec{\actioninst_i}$
\item $s_n\not\models G$
\item $\exists c\in C$ such that $\sigma$ does not satisfy $c$.
\end{compactenum}
Case 1 cannot be true as $\forall \actioninst_i, 0\leq i < n: s'_i\models Pre'(\actioninst_i)$, since $\plan'$ is a solution for $\pddlProblem'$, $Pre'(\actioninst_i)\models \actionPrec{\actioninst_i}$, and $s'_i\models \actionPrec{\actioninst_i}$ iff $s_i\models\actionPrec{\actioninst_i}$.
Case 2 cannot be true because we have $s'_f\models G'$, which is implies $s_n\models G$.

We treat case 3 depending on the type of constraint $c$.
\begin{compactitem}
    \item Case $\A(\phi)$. There is a state $s_i$ such that $s_i\models \neg\phi$, which implies that $s'_i\models \neg\phi$.
    However, since precondition $Pre'(\actioninst_i)$ requires that $\phi$ holds, that makes plan $\plan'$ not executable, which is a contradiction.
    \item Case $\ST(\phi)$. There is no state in $\sigma$ where $\phi$ holds. As a result, there is neither such a state in $\sigma'$, meaning that effect $\cond{\phi}{hold_c}$ is never triggered.
    Thus, $hold_c$ does not hold at $s'_f$, which is a contradiction as $hold_c$ is required in the goal of $\pddlProblem'$.
    \item Case $\AO(\phi)$. There are states $s_l$, $s_j$ and $s_i$ such that $l<j<i$, $\phi$ is true in $s_l$, $\phi$ is false in $s_j$ and $\phi$ is true in $s_i$.
    This implies that $\phi$ has the corresponding truth values in states $s'_l$, $s'_j$ and $s'_i$.
    Based on the action effects in $\pddlProblem'$, $seen_\phi$ holds after state $s'_l$ and $prevent_\phi$ holds after state $s'_j$.
    As a result, both $\phi$ and $prevent_\phi$ hold in state $s'_i$, making action $a_i$ inapplicable, which is a contradiction.
    \item Case $\SB(\phi,\psi)$. There is a state $s_i$ where $\phi$ holds and there is no state $s_j$, $j<i$, where $\psi$ holds.
    This implies that $s'_i\models \phi$ and $\forall j<i: s'_j\models \neg \psi$.
    As a result, action effect $\cond{\psi}{seen_\psi}$ is never satisfied before $s'_i$, meaning that $seen_\psi$ is false in $s'_i$.
    Therefore, based on precondition $\phi\rightarrow seen_\psi$, action $a_i$ is not applicable in state $s'_i$, which is a contradiction.
    \item Case $\SA(\phi,\psi)$. There is a state $s_j$ where $\phi\wedge \neg\psi$ holds and there is no state after $s_i$, $i>j$, where $\psi$ holds.
    This implies that $s'_j\models \phi\wedge\neg\psi$ and $\forall j<i\leq n \vee i=f: s'_i\models \neg\psi$.
    Therefore, based on action effects $\set{\cond{\phi \wedge \neg \psi}{\neg hold_{c}}}$ and $\{\cond{\psi}{hold_c}\}$, $hold_c$ does not hold in the final state $s'_f$, which is a contradiction because $hold_c$ is required by goal $G'$.
\end{compactitem}
By contradiction, we have shown that if plan $\plan'$ is a solution for $\pddlProblem'$ then $\plan$ is solution for $\pddlProblem$.

Therefore, if \lcc compiles a problem $\pddlProblem$ into problem $\pddlProblem'$, then plan $\tup{\actioninst_0, \dots, \actioninst_{n\minus 1}}$ is valid for $\pddlProblem$ iff plan $\tup{\actioninst_0, \dots, \actioninst_{n\minus 1}, \endaction}$ is valid for $\pddlProblem'$.
\end{proof}

\subsection{Proof of Proposition \ref{prop:lcc_complexity}} \label{proof:prop:lcc_complexity}

\proplcccomplexity*

\begin{proof}
In order to compile away the constraints $C$ in problem $\pddlProblem$, \lcc first needs to update the set of atoms $F$ of $\pddlProblem$ with monitoring atoms (see line \ref{line:lcc_fprime} of Algorithm \ref{alg:lifted_no_regr}).
This is achieved in one pass over the constraints in $C$, and thus the cost of this operation is $\mathcal{O}(n_c)$.
Second, \lcc iterates once over each constraint $c\in C$ in order to identify the set of preconditions $P$ and the set of effects $E$ that need to be added to the actions of $\pddlProblem$ (lines \ref{line:lcc_forconstr}--\ref{line:lcc_sa_eff}).
In each iteration, \lcc adds a fixed number of preconditions and effects in sets $P$ and $E$, and thus the cost of each iteration is $\mathcal{O}(1)$.
Therefore, the cost of computing sets $P$ and $E$ is $\mathcal{O}(n_c)$.
Third, \lcc adds preconditions $P$ and effects $E$ to every action in $\pddlProblem$, as well as to action $\endaction$, which requires $n_a\plus 1$ operations (lines \ref{line:lcc_foraction}--\ref{line:lcc_addendaction}).
Fourth, \lcc updates the initial state and the goal with $hold_c$ atoms by performing one iteration over the constraints in $C$ (line \ref{line:lcc_return}).

Based on the above steps, the worst-case time complexity of \lcc is $\mathcal{O}(3n_c \plus n_a\plus 1)$, which, after simplifications, becomes $\mathcal{O}(n_a\plus n_c)$.
\end{proof}
\section{Datasets}
\label{sec:appendix:experiments:constraints}

\subsection{Constrained Planning Problem Generator} \label{sec:appendix:experiments:constraints:generator}
\begin{figure}[t]
    \centering
    \includegraphics[width=.75\linewidth]{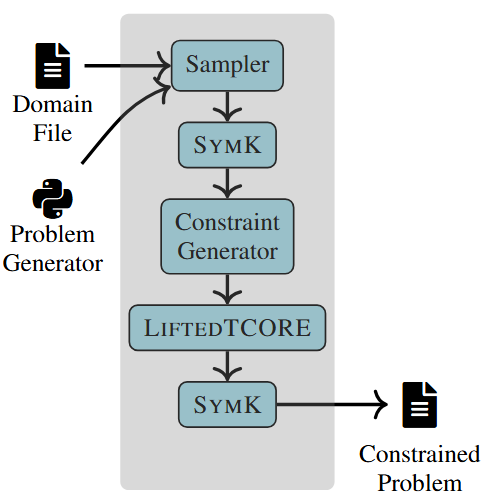}
    \caption{Constrained planning problem generator.}
    \label{fig:generator}
\end{figure}

We used a custom generator to introduce constraints to planning problems from the domains used in the latest International Planning Competition (IPC).
Figure \ref{fig:generator} presents the architecture of our constrained planning problem generator.
We provide as input to our generator a PDDL domain file and a problem instance generator for one of the planning domains used in our analysis, both of which have been made public by the latest IPC.
The output of our generator is a PDDL planning problem with constraints for the provided domain.

Our generator operates as follows.
First, we sample a problem file using the input problem generator (see `Sampler' in Figure \ref{fig:generator}); this problem file does not include constraints.
Second, we invoke the off-the-shelf planner \symk\ to derive an optimal plan for the sampled problem.
Third, we generate constraints for the sampled problem (see `Constraint Generator').
Constraint generation is task-aware, in the sense that we guide the search towards constraints that complicate the optimal solution of the unconstrained problem discovered by \symk\ in the previous step, while preserving feasibility, i.e., the constrained problem has a solution. 
We add the produced constraints to the sampled problem file, leading to planning problem with constraints.
Fourth, we use \lccr\ to compile away the constraints from this problem and, fifth, we invoke \symk\ to identify an optimal plan for the constrained planning problem.
If this plan is longer than the optimal plan of the original (unconstrained) problem, then we add the generated constrained planning problem to the dataset of our experimental evaluation.

\begin{figure}[t]
    \centering
    \includegraphics[width=.5\linewidth]{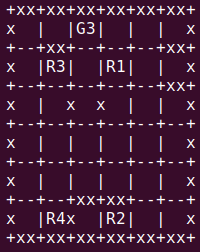}
    \caption{Initial state of a problem from the `Ricochet Robots' domain. `R1', `R2', `R3' and `R4' denote robots, `G3' is a goal location, and `X' symbols denote walls.}
    \label{fig:ricochet_example}
\end{figure}

In order to construct constraints with quantifiers, we instruct our generator to modify the constraints it produced by replacing a subset of their constants with variables and introducing quantifiers referring to the newly added variables. 

We illustrate the constraint generation step of our generator with the following example. 
\begin{example}
Consider the `Ricochet Robots' planning domain, which comprises a 2D gridworld with walls blocking movement between certain squares.
A square on the board may be empty, contain a robot or a goal location. 
The task is to move each robot to its designated goal location. 
Robots can be moved by pushing them towards a cardinal direction; after being pushed, a robot continues moving until it reaches a wall or another robot.

Figure \ref{fig:ricochet_example} depicts the initial state of a problem from the `Ricochet Robots' domain.
The goal is to bring robot `R3' in goal location `G3'.
The optimal plan to solve this problem consists of two `push' actions.
The first one pushes robot R3 east, making it move one step until it hits robot R1.
The second push action moves R3 north, making it stop at goal location G3, as it hits the northern border of the gridworld. 

Our constraint generator analyses the trajectory induced by the above optimal plan of the problem and identifies a set of atoms whose inclusion in a constraint may complicate this planning problem.
In our example, executing the optimal plan yields a state where robot R3 is in location `L23', i.e., the square below G3.
Our constraint generator recognises that, if we were to disallow robot R3 from stepping on square L23, then the previously discovered optimal plan would not be executable, necessitating a lengthier plan to solve the problem.
Thus, our constraint generator may produce constraint $\A(\neg(\atF(\rthree,\ltwothree)))$, stating that robot R3 must never be on square L23.
As a result of adding this constraint to the problem, the length of its optimal solution becomes 4 moves: (i) moving R3 west, (ii) moving R3 north, (iii) moving R1 north and (iv) moving R3 west, after which R3 hits R1, thus stopping at goal location G3.

Our constraint generator may also complicate the problem by introducing constraints imposing intermediate goals.
For instance, in our example problem, we may generate constraint $\ST(\atF(\rthree,\lthreetwo))$, stating that, at some point during plan execution, robot R3 needs to be in location L32, i.e., the square that is below the starting position of R3.
Moreover, our generator may modify sampled constraints by introducing quantifiers.
Constraint $\A(\neg(\atF(\rthree,\ltwothree)))$, e.g., may be transformed as $\A(\forall R: \neg(\atF(R,\ltwothree)))$, expressing that no robot is ever allowed to step on square L23, while constraint $\ST(\atF(\rthree,\lthreetwo))$ may be transformed as $\ST(\exists R: \atF(R,\lthreetwo))$, expressing that, at some point during plan execution, there is a robot that is located on square L32.
\qedex
\end{example}

\begin{table}
\centering
\begin{tabular}{lcccccccccc} 
\centering
\multirow{2}{*}{} & \multicolumn{5}{c}{Ground Constraints}  \\
& \A & \ST & \AO & \SB & \SA \\
\cmidrule(r){2-6}
Folding & 2 & 4 & 1 & 8 & 5  \\
Labyrinth & 6 & 5 & 0 & 6 & 3 \\
Quantum & 4 & 4 & 4 & 4 & 4 \\
Recharging & 3 & 7 & 0 & 5 & 5  \\
Ricochet & 1 & 11 & 1 & 3 & 4  \\
Rubik's Cube & 2 & 7 & 0 & 4 & 7  \\
Slitherlink & 2 & 2 & 0 & 8 & 8  \\
\cmidrule(r){2-6}
Total & 20 & 40 & 6 & 38 & 36  \\
\end{tabular}
    \caption{Constraint type distribution in our dataset with ground constraints.}
    \label{tbl:constraint_types:ground}
\end{table}
\begin{table}
\centering
\begin{tabular}{lcccccccccc} 
\centering
\multirow{2}{*}{}  & \multicolumn{5}{c}{Non-Ground Constraints} \\
&  \A & \ST & \AO & \SB & \SA\\
\cmidrule(r){2-6}
Folding  & 2 & 3 & 3 & 9 & 3 \\
Labyrinth  & 6 & 5 & 0 & 6 & 3 \\
Quantum  & 1 & 4 & 1 & 7 & 7 \\
Recharging  & 3 & 7 & 0 & 5 & 5 \\
Ricochet  & 4 & 4 & 4 & 4 & 4 \\
Rubik's Cube  & 4 & 4 & 4 & 4 & 4 \\
Slitherlink  & 1 & 4 & 3 & 6 & 6 \\
\cmidrule(r){2-6}
Total  & 21 & 31 & 15 & 41  & 32 \\
\end{tabular}
    \caption{Constraint type distribution in our dataset with non-ground constraints.}
    \label{tbl:constraint_types:nonground}
\end{table}

\subsection{Constraint Type Statistics} \label{sec:appendix:experiments:constraints:distribution}

We report the constraint type distribution in our dataset.
Tables \ref{tbl:constraint_types:ground} and \ref{tbl:constraint_types:nonground} present the number of constraints of each type, i.e., $\A$, $\ST$, $\AO$, $\SB$ or $\SA$, per IPC domain, in our dataset with ground constraints, and in out dataset with non-ground constraints, respectively.
These constraints were produced by our constraints generator (Appendix \ref{sec:appendix:experiments:constraints:generator}), given domain files and unconstrained problem instances for the IPC domains. 
We observe that our generator more often complicated the unconstrained problem by introducing constraints that impose intermediate goals, i.e., $\ST$, $\SB$ and $\SA$.
On the other hand, it was often more difficult to find $\A$ and $\AO$ constraints that complicate the problem while maintaining solvability, resulting in relatively fewer $\A$ and $\AO$ constraints.

\section{Coverage on ICAPS 2021 Datasets}
\label{sec:appendix:experiments:icaps21}

Table~\ref{tab:icaps21} reports the coverage of all evaluated systems on the benchmark introduced by~\citet{DBLP:conf/aips/BonassiGPS21}. We excluded the Openstack domain, as it is fully grounded and falls outside the scope of this work. Both \lcc and \lccr perform poorly on this dataset, which contains many constraints that, when compiled, lead to large preconditions and effects. As a result, LAMA often fails during preprocessing, frequently running out of memory on instances compiled by lifted approaches. We were also unable to run experiments on the Storage domain due to incompatibilities with the Unified Planning library used to implement \lcc and \lccr. Among the lifted compilations, LTL-C works well, and this is mainly because it introduces PDDL axioms, which help decompose formulas and reduce the complexity of compiled actions. TCORE performs particularly well on this dataset, as working with the ground representation allows for more targeted, sparse updates to actions.

\begin{table}
    \centering
    \begin{tabular}{llcccc}
    \toprule
    &Domain & \lcc & LiftedTCORE & TCORE & LTL-C\\
    \midrule
    \multirow{4}{*}{\rotatebox{90}{ICAPS21}}&Rovers & 25 & 39 & \textbf{92} & 40 \\
    &Trucks & 16 & 0 & \textbf{78} & 39 \\
    &Storage & N.A. & N.A. & \textbf{33} & 24 \\
    &Tpp & 8 & 0 & 24 & \textbf{58} \\
    \midrule
    &Total & 49 & 39 & \textbf{227} & 161 \\
    \end{tabular}
    \caption{Coverage of all systems across all domains of the ICAPS21 benchmark. N.A. Indicates not applicable.}
    \label{tab:icaps21}
\end{table}

\end{document}